%% file: main.tex
\documentclass[10pt,leqno]{article}

\input{math_commands.tex}

\usepackage{hyperref}
\usepackage{url}
\usepackage{wrapfig}
\usepackage{tabularx}
\usepackage{booktabs}
\usepackage{graphicx}
\usepackage{algorithm}
\usepackage{algorithmic}
\usepackage{subcaption}
\usepackage{makecell}
\usepackage{amsmath}
\usepackage{amssymb}
\usepackage{mathtools}
\usepackage{amsthm}
\usepackage{newfloat}
\usepackage{listings}
\usepackage{xcolor}
\usepackage{tcolorbox}
\usepackage{tikz}
\usepackage{tikz,pgffor}
\usepackage{pgfplots}
\usepackage{subcaption}
\usepackage{booktabs}
\usepackage{natbib}

\pgfplotsset{compat=1.16}

\usepackage{booktabs,tabularx,ragged2e,caption}
\newcolumntype{L}{>{\RaggedRight\arraybackslash}X}

\usetikzlibrary{shapes,positioning,arrows,arrows.meta,calc,automata,matrix,fit,backgrounds}

\tikzstyle{ang}=[regular polygon, regular polygon sides = 3,draw,inner sep=0pt,minimum size=6mm, yshift = -0.75 mm]
\tikzstyle{dem}=[shape=diamond,draw,inner sep=0pt,minimum size=6mm]
\tikzstyle{ran}=[shape=circle,draw,inner sep=0pt,minimum size=6mm]
\tikzstyle{det}=[shape=rectangle,draw,inner sep=0pt,minimum size=5mm]
\tikzstyle{tran}=[draw,->,>=stealth, rounded corners]
\tikzstyle{pt}=[shape=circle,draw,inner sep=0pt,minimum size=0mm]
\tikzstyle{trannoarrow}=[draw,>=stealth, rounded corners]

\tikzstyle{state}+=[minimum size = 6mm, inner sep=0,outer sep=1]
\colorlet{disabled}{lightgray}
\tikzstyle{state}=[draw,rectangle,inner sep=5pt,rounded corners=2pt]
\tikzstyle{action}=[font=\small,inner sep=0pt,outer sep=3pt]
\tikzstyle{actionnode}=[circle,draw=black,fill=black,minimum size=1mm,inner sep=0,outer sep=0]
\tikzstyle{actionedge}=[draw,-]
\tikzstyle{prob}=[font=\scriptsize,inner sep=0pt,outer sep=1pt]
\tikzstyle{probedge}=[draw,->]
\tikzstyle{directedge}=[draw,->]
\tikzset{chainarrow/.tip={Stealth[length=3pt]}}
\tikzset{>=chainarrow}
\theoremstyle{plain}
\newtheorem{theorem}{Theorem}[section]

\newtheorem{lemma}[theorem]{Lemma}

\theoremstyle{definition}
\newtheorem{definition}[theorem]{Definition}

\theoremstyle{remark}

\newtheorem*{claim}{Claim}

\usepackage[textsize=tiny]{todonotes}

\title{Data-Aware and Scalable Sensitivity Analysis for Decision Tree Ensembles
\thanks{This is the long version of a paper accepted at ICLR 2026,the $14^{th}$ International Conference of Learning Representations.}}

\author{Namrita Varshney, Ashutosh Gupta, Arhaan Ahmad, Tanay V. Tayal, \& S. Akshay  \\
Department of Computer Science and Engineering,\\
Indian Institute of Technology Bombay, Mumbai, India. \\
\texttt{namrita@iitb.ac.in, akg@cse.iitb.ac.in,arhaan.ahmad2003@gmail.com,} \\
\texttt{tanaytayal@cse.iitb.ac.in, akshayss@cse.iitb.ac.in}
}

\date{}

\newcommand{\senspb}{\textsc{SensPB}}
\newcommand{\veritas}{\textsc{Veritas}}
\newcommand{\kantchelian}{\textsc{Kant}}
\newcommand{\ourtool}{\textsc{Ensense}}
\newcommand{\ensemble}{\mathcal{E}}
\newcommand{\unaff}{\mathcal{U}}
\newcommand{\allleafs}{\mathcal{A}}

\newcommand{\first}{x^{(1)}}
\newcommand{\second}{x^{(2)}}

\begin{document}

\maketitle

\begin{abstract}

Decision tree ensembles are widely used in critical domains, making robustness and sensitivity analysis essential to their trustworthiness. We study the feature sensitivity problem, which asks whether an ensemble is ``sensitive" to a specified subset of features - such as protected attributes- whose manipulation can alter model predictions. Existing approaches often yield examples of sensitivity that lie far from the training distribution, limiting their interpretability and practical value. We propose a data-aware sensitivity framework that constrains the sensitive examples to remain close to the dataset, thereby producing realistic and interpretable evidence of model weaknesses. To this end, we develop novel techniques for data-aware search using a combination of mixed-integer linear programming (MILP) and satisfiability modulo theories (SMT) encodings. Our contributions are fourfold. First, we strengthen the NP-hardness result for sensitivity verification, showing it holds even for trees of depth 1. Second, we develop MILP-optimizations that significantly speed up sensitivity verification for single ensembles and, for the first time, can also handle multiclass tree ensembles. Third, we introduce a data-aware framework that generates realistic examples close to the training distribution. Finally, we conduct an extensive experimental evaluation on large tree ensembles, demonstrating scalability to ensembles with up to 800 trees of depth 8, achieving substantial improvements over the state of the art. This framework provides a practical foundation for analyzing the reliability and fairness of tree-based models in high-stakes applications.

\end{abstract}
\section{Introduction}
\label{sec:intro}
Decision tree ensembles are a popular AI model, known for their simplicity, power, and interpretability. They are ubiquitous across multiple industries, ranging from banking \citep{bank2, banking} and healthcare \citep{health1, health2} to water resources engineering \citep{WaterResources} and telecommunication \citep{telecom}.  Given that this class of models forms a cornerstone for automated decision-making in various industries, it is important to be able to trust their answers and provide guarantees on their reliability. Towards this goal, there has been significant research in the past decade on formalizing and verifying various safety properties of tree ensembles. 

In this paper, we focus on one such problem: understanding the influence that a particular subset of input features can have on the output of a decision tree classifier. This notion of {\em sensitivity} of the model to a feature set has been studied in various contexts in previous works. It has been related to individual fairness and causal discrimination \citep{Dwork12, Calzavara2022, fairness, survey2023}, which are central to building responsible AI systems. A model is called sensitive to a specified set of features if the output of the model can be changed by keeping every other feature the same and varying only the specified input features. Thus the problem of feature sensitivity verification (or simply sensitivity) is to check whether a given tree ensemble model $\ensemble$ is sensitive to a specified subset of features $F\subseteq \mathcal{F}$, i.e, whether there exist two inputs, called a {\em (sensitive) counterexample pair} that are identical on $\mathcal{F}\setminus F$, but on which $\ensemble$ gives different outputs. Knowledge of sensitivity to specific subsets of input features is important for understanding and mitigating attacks that aim to change model outputs by manipulating a small set of protected input features. This analysis can also help uncover unwanted patterns in the trained models that may arise from social biases in the training data. 

Recently, \citet{ahmad2025sensitivity} showed that sensitivity verification is NP-hard for ensembles with trees of maximum depth at least 3, and gave a tool utilizing a pseudo-Boolean encoding to tackle the problem for binary classifiers. However, their NP-hardness proof (via a 3-SAT reduction) does not extend to ensembles of trees with depth at most 1 or 2, leaving the hardness of the sensitivity problem open for such ensembles. Trees of depth 1, also called decision stumps, have been long studied in the literature~\cite{WangZCBH20, HorvathM0V22, Martinez-MunozHS07} and are of interest in several applications~\cite{ChenWLXHB23, HuynhND18}. 

A second, major challenge is that sensitive counterexample pairs may a priori lie far from actual data points, providing only weak evidence of a model’s sensitivity. To see an illustrative example of this, consider Figure~\ref{fig:eg-data-aware}, where a tree ensemble trained on MNIST with 786 features yields two counterexample pairs. In the left pair, the decision flips (3 to 8), but neither image likely appears in the training set, so this does not reveal a model weakness. In contrast, the right pair is informative: the first image closely resembles a real training image (3) and is correctly classified, but modifying 20/786 features causes misclassification to 8 while remaining near the data distribution.
Do similar cases occur when $|F|=1$? They do, even in tabular datasets, as illustrated in Section~\ref{appsec:tabular-examples} in the Appendix. This raises the question: can we identify sensitive counterexample pairs that are closer to the real data distribution, enabling meaningful conclusions about model sensitivity? Such counterexamples are valuable for downstream tasks, such as retraining or hardening the model, but in this work we focus solely on their identification, which is already a challenging problem.

\begin{figure}[t]
    \centering
    \begin{subfigure}[b]{0.45\textwidth}
        \includegraphics[scale=0.27]{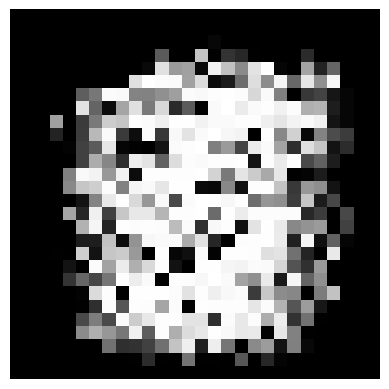}
        \includegraphics[scale=0.27]{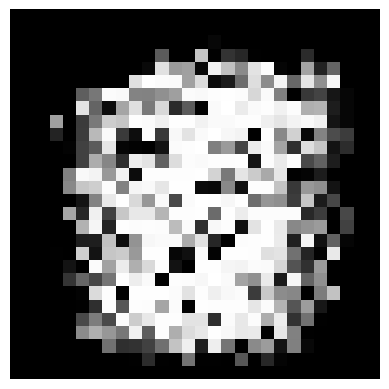}
        \label{fig:bad}
    \end{subfigure}
    \hspace{0.001\textwidth}
    \begin{subfigure}[b]{0.45\textwidth}
        \includegraphics[scale=0.27]{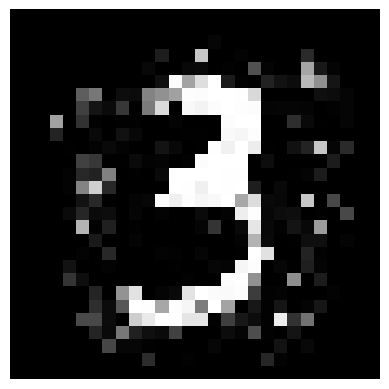}
        \includegraphics[scale=0.27]{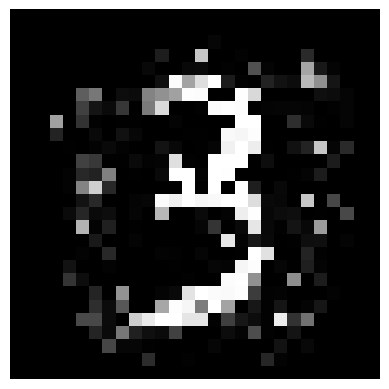}
        \label{fig:good}
    \end{subfigure}
    \caption{
    Two counterexample pairs from a tree ensemble trained on MNIST. (Left) A counterexample pair where the left image is classified as 3 and the right as 8; but both are meaningless blobs. (Right) A pair closer to the training distribution. The left image is classified as 3 and the right as 8; where both resemble a 3, but the second is confidently misclassified, providing a more useful witness of sensitivity.}
    \label{fig:eg-data-aware}
\end{figure}

We start by showing that the sensitivity problem is NP-hard even for ensembles of decision tree stumps (trees of depth 1) via a novel reduction from the subset-sum problem. Next, to find counterexample pairs closer to the data, we develop two complementary strategies - one using a product of marginal distributions as an objective function and another constraint-solver based approach where we avoid regions of sparsely populated data during the search. The pseudo-Boolean approach of ~\citet{ahmad2025sensitivity} is difficult to extend with such objective functions and hence we revisit a mixed integer linear program (MILP) approach for sensitivity verification. However, a baseline MILP implementation, based on the original encoding of \citet{kantchelian} performs worse than the pseudo-Boolean method, highlighting the challenge of this approach. We introduce novel optimizations to the MILP encoding that result in a significant speed up, making it feasible to analyze large ensembles, while guiding the search toward meaningful counterexample pairs (i.e., close to the data distribution). 
We also show that our new MILP encoding can be extended to obtain to-the-best-of-our-knowledge the first tool for sensitivity verification over multiclass tree ensemble classifiers. Empirically, we demonstrate the effectiveness of our approach on ensembles trained using XGBoost \citep{xgboost}, achieving an order of magnitude improvement in runtime compared to earlier methods, as well as higher quality counterexamples, measured by their proximity to the data distribution. 
Thus, our main contributions are:
\begin{itemize}
    \item We show that sensitivity verification is NP-hard even for ensembles of depth-1 trees.
    \item We significantly advance sensitivity verification by enabling discovery of counterexample pairs closer to the training distribution, through two complementary strategies: one using a product-of-marginals objective, and another a novel constraint-solving based approach to compute clause summaries and prune data-sparse regions in the input space.
       \item We design a MILP-based encoding with key novel optimizations for sensitivity verification, implemented via a combination of MILP and SMT solvers, and - to the best of our knowledge - are the first to extend sensitivity verification to multiclass decision tree ensembles.
    \item We implement our approach in a tool \ourtool\ and perform extensive experiments on 18 datasets and 36 tree ensembles for binary and multiclass settings. \ourtool~can verify tree ensembles with up to 800 trees
    of depth 8, significantly outperforming the state of the art.
    \end{itemize}

    {\bf Related Work.} A closely related problem is local robustness, which involves finding adversarial perturbations that can cause misclassification. In the context of decision trees, this problem was originally defined in~\citet{kantchelian} who showed its NP-hardness and used an MILP encoding to solve it. Since then, a rich line of work has emerged for robustness verification~\citep{veritas2021, chen, Ranzato_Zanella_2020, vote, WangZCBH20}, using different techniques, from input-output mappings in~\citet{vote} to abstract interpretation in~\citet{Ranzato_Zanella_2020} to dynamic programming~\citet{WangZCBH20} and clique-based approaches in the state-of-the-art tool, \veritas~\citep{veritas2021}. Most recently, \citet{veritas_multi} extended the last approach to local robustness verification for multiclass tree-ensembles. While specific ideas from robustness verification are useful for sensitivity verification (and we do build on some of them), the locality of the robustness problem allows a mixture of simplifying optimizations given the knowledge of one input. In contrast, sensitivity verification involves a universal quantification over two inputs, making it a more complex problem.
    
    {\bf Organization of the Paper.} In Section~\ref{sec:prelims} we introduce preliminaries and notation for decision tree ensembles.
In Section~\ref{sec:hardness}, we formalize feature sensitivity and prove NP-hardness of sensitivity verification, even for depth-1 ensembles.
Motivated by the fact that standard witnesses can be far from the data distribution, in Section~\ref{sec:data-aware}, we define \emph{data-aware} sensitivity verification and propose utility and clause-based mechanisms to prefer data-aware counterexamples.
Building on this formulation, Section~\ref{sec:encoding} develops an improved MILP encoding with solver-oriented optimizations and shows how to incorporate the proposed data-aware objectives and constraints.
In Section~\ref{sec:multifeat}, we extend the approach to multi-class ensembles, and in Section~\ref{sec:experiments}, we evaluate runtime and counterexample quality (including ablations and multi-feature studies). Finally, in Section~\ref{sec:conclusion}, we conclude with a few remarks and potential future directions.


\section{Preliminaries}
\label{sec:prelims}
In a classification problem, we are given an input space $\mathcal{X} \subseteq \mathbb{R}^d$ defined over a $d$-dimensional feature space $\mathcal{F}$, and an output space $\mathcal{Y} = \{0,1,\dots, C-1\}$, where $C$ is the number of classes. We intend to learn the unknown mapping $\ensemble: \mathcal{X} \to \mathcal{Y}$. For any $x\in \mathcal{X}$, we will denote the value of feature $f \in \mathcal{F}$ for $x$ as $x_f$. 
A decision tree is recursively defined as either a leaf node or an internal node.  
Each leaf node $n$ has a leaf value $n.val$, which is a scalar in $\mathbb{R}$ (for binary classification) or a vector in $\mathbb{R}^C$ (for multiclass classification).
Each internal node $n$ consists of references to child nodes, decision trees $n.yes$ and $n.no$, and a guard $n.guard$, which is a linear inequality of the form $X_f < \tau$. 
Here, $f$ is a feature, $X_f$ denotes the variable for feature $f$, and $\tau$ is a constant. 
An input $x \in \mathcal{X}$ is evaluated on the tree $T$ by a top-down traversal. For each encountered internal node $n$, the guard of $n$, say $n.guard = X_f < \tau$ is evaluated by substituting $X \leftarrow x$ in the inequality. If the guard inequality evaluates to true, we move to $n.yes$; otherwise, we move to $n.no$. This process continues till we reach a leaf node $n$, and the output of the tree $T(x)$ is given by $n.val$.

To increase the power of a single decision tree, it is common to use ensembling, where multiple decision trees are trained, and the outcomes are aggregated to reach a final decision. Formally, a tree ensemble classifier $\ensemble: \mathcal{X} \to \mathcal{Y}$ consists of a set $\mathcal{T}$ of decision trees. The output of the ensemble is found by aggregating the outputs of each decision tree. There are three notions of outputs from a tree ensemble: (i) $\ensemble_c^{raw}(x)$, which represents a linear aggregation of the outputs of the ensemble for class $c$, typically a sum over the outputs of all trees in the ensemble, (ii) $\ensemble_c^{prob}(x)$: the predicted class probability of class $c$ and (iii) the output label $\ensemble(x) = \text{arg}\,\text{max}_c \, \ensemble_c^{prob}(x)$. 
In the binary classification setting, where $\mathcal{Y} = \{0, 1\}$, each leaf node stores $n.val \in \mathbb{R}$ and 
$\ensemble^{raw}_1(x) = \sum_{T \in \mathcal{T}} T(x)$,  $\ensemble^{raw}_0(x) = - \ensemble^{raw}_1(x)$,
$\ensemble^{prob}_1(x) = \textsc{Sigmoid}\left(\ensemble^{raw}_1(x)\right)$, $\ensemble^{prob}_0(x) = 1 - \ensemble^{prob}_1(x)$, where $\textsc{Sigmoid}: \mathbb{R} \to \mathbb{R}$ is the sigmoid function defined as $\textsc{Sigmoid}(x) = 1/(1+e^{-x})$.
For convenience, a glossary of the notation used throughout the paper is provided in Appendix~\ref{appsec:notation}.



\section{Feature Sensitivity and Hardness}
\label{sec:hardness}

In this section, we define the sensitivity verification problem and provide our improved hardness results. Given a decision tree ensemble classifier $\ensemble$, our goal is to find two points $\first$ and $\second$ in the input space $\mathcal{X}$, such that they differ only in a specified subset of features $F\subseteq \mathcal{F}$ and have the same values for all the remaining features, while producing different output labels when passed to the classifier, i.e., $\ensemble(\first) \neq \ensemble(\second)$. This problem becomes more significant if, not only are their output labels different, but the predicted class probabilities of the outputs are also far apart, indicating a change from a highly confident positive prediction to a highly confident negative prediction.

\begin{definition}\label{def:orig-sensti}
    Given a tree ensemble for binary classification $\ensemble: \mathcal{X} \to \{0, 1\}$, a set of features $F \subseteq \mathcal{F}$ and a parameter $g \geq 0$, $\ensemble$ is said to be $(g, F)$-sensitive, if we can find two inputs $\first$, $\second$ (called a counterexample pair) such that $(\first)_{\mathcal{F}\setminus F} = (\second)_{\mathcal{F}\setminus F}$ and $\ensemble^{prob}_1(\first) \geq 0.5+g$ and $\ensemble^{prob}_1(\second) \leq 0.5-g$. The sensitivity verification problem asks if a given tree ensemble classifier $\ensemble$ is $(g,F)$-sensitive.
\end{definition}

In what follows, we often fix $g$ and refer to $F$-sensitivity; when $F$ is clear from context, we simply write sensitivity. 

\citet{ahmad2025sensitivity} showed that sensitivity is NP-Hard for tree ensembles with maximum depth $\geq 3$ for $|F| = 1$, $|F|=\text{constant}$ and $F=\mathcal{F}$. However, as noted by the authors, their reduction (from 3-SAT) does not work for depth 1 and 2.
Our first result is to close this gap by showing NP-hardness at depth 1 using a novel reduction from the subset sum problem, a well-known NP-complete problem~\cite{GareyJ79}. 

\begin{theorem}
\label{thm:weaknp:depth1}
        The feature sensitivity problem with $|F| = 1$ is NP-Hard for tree ensembles with depth $\geq 1$.
\end{theorem}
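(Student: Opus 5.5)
We reduce from subset sum. An instance consists of positive integers $a_1,\dots,a_n$ and a target $t$, and asks whether some $S\subseteq\{1,\dots,n\}$ has $\sum_{i\in S}a_i=t$. The ensemble will have one Boolean-like feature $X_i$ per item, plus one distinguished feature $X_0$ that forms the sensitive set $F=\{X_0\}$. Because every tree is a stump, each tree tests a single feature and contributes one of two leaf values. The $\second$ input must agree with $\first$ on $X_1,\dots,X_n$, so the item features jointly encode a subset $S$. We need the classification to flip when only $X_0$ changes; the difficulty is that a single monotone threshold on $\sum_{i\in S} a_i$ only checks an inequality. We therefore exploit the two inputs: one inequality is enforced on $\first$ and the reverse inequality on $\second$, which together pin the sum exactly.

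\textbf{Construction.} For each $i$ we add a stump with guard $X_i<0.5$, with leaf value $0$ on the yes-branch and $a_i$ on the no-branch, so the item stumps sum to $\sigma(S)=\sum_{i\in S}a_i$, where $S=\{i: x_i\ge 0.5\}$. Since only $X_0$ may differ between $\first$ and $\second$, $\sigma$ is the same for both. For $X_0$ we add stumps so that its contribution takes two relevant values $c_1$ (for $x_0<0.5$) and $c_2$ (for $x_0\ge 0.5$). We also need a way to encode a strict bound via a margin, so we scale everything by $2$ and use odd offsets, or equivalently fix $g$ suitably.

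\textbf{The central trick} is that one input alone can only yield $\sigma\ge t$ or $\sigma\le t$, while the definition requires \emph{both} $\ensemble_1^{raw}(\first)\ge \theta$ and $\ensemble_1^{raw}(\second)\le -\theta$, where $\theta=\textsc{Sigmoid}^{-1}(0.5+g)$. A naive choice $c_1=-t$, $c_2=-t$ fails because it gives the same sum on both inputs. Instead we negate the item contributions across the two values of $X_0$. This is impossible with separate stumps, since the item trees do not see $X_0$, so the sign must not depend on $X_0$. The plan is therefore to choose $g$ in terms of the instance. With $c_1=-t+\delta$ and $c_2=-t-\delta$, the sensitivity condition becomes $\sigma-t+\delta\ge\theta$ and $\sigma-t-\delta\le-\theta$, i.e.\ $\theta-\delta\le\sigma-t\le\delta-\theta$. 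Taking $\delta=\theta$ would make the condition vacuous, so instead we take $\theta=\delta-\tfrac12$ with $\delta=1$. Then $\theta=\tfrac12$, the condition reads $-\tfrac12\le\sigma-t\le\tfrac12$, and since $\sigma-t$ is an integer this is exactly $\sigma=t$. We set $g=\textsc{Sigmoid}(1/2)-0.5$.

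\textbf{Correctness.} If $S$ witnesses the subset-sum instance, set $x_i=1$ for $i\in S$, $x_i=0$ otherwise, and let $\first,\second$ differ only in $x_0$. Then the raw scores are $\pm1$, so the probabilities are $\textsc{Sigmoid}(\pm1)$, which lie beyond the $0.5\pm g$ thresholds. Conversely, any counterexample pair induces a subset $S$ with $|\sigma(S)-t|\le\tfrac12$, hence $\sigma(S)=t$.

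\textbf{Remaining points.} We must justify that $\first$ is the input with the higher score, which is why the first case above is assigned to $c_1$. The encoding is polynomial: there are $n+1$ or $n+2$ stumps and rational constants. The one genuine obstacle is that $g$ is irrational as written. To handle this, we either allow $g$ as a rational approximation with threshold strictly between $\textsc{Sigmoid}(1/2)$ and $\textsc{Sigmoid}(1)$ and use integrality slack, or observe that the problem can be phrased on raw scores. I would fix a rational $g$ with $0.5+g\in(\textsc{Sigmoid}(1/2),\textsc{Sigmoid}(1))$ (e.g.\ $0.5+g=0.65$), so that $\theta\in(\tfrac12,1)$. The integrality argument still forces $\sigma=t$, since $|\sigma-t|\le 1-\theta<1$.
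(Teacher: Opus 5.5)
Your reduction is correct and is essentially the paper's. Both use one stump per item with leaves $0$ and $a_i$, plus one stump on the sensitive feature contributing $-t$ shifted by a margin in opposite directions on its two branches, so that integrality of $\sigma(S)-t$ forces $\sigma(S)=t$. The differences are only in the constants. The paper uses offsets $-k\pm 0.5$ and argues via the signs of the raw scores (effectively $g=0$); its half-integer offsets also keep the raw score away from $0$. You use $-t\pm 1$ with a fixed rational $g>0$, and in your version $g>0$ is genuinely needed to exclude both inputs lying on the same side of the $X_0$ split. That case, like the swapped assignment of $\first,\second$, is trivial to rule out but should be stated explicitly.
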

\begin{proof}
    Consider an instance of the integer subset sum problem, i.e., we are given a set of $n$ integers $\mathcal{U}$ and an integer $k$, and we want to find a subset $U \subseteq \mathcal{U}$, such that $\Sigma_{l \in U} = k$.  We call the $i^{th}$ integer $u_i$ where $i$ varies from $0$ to $n-1$. For every index $i$, we create a Boolean feature $f_i$. Then we create a decision tree of depth 1, which splits on $f_i$, giving output $0$ if $f_i$ is false and $u_i$ otherwise. 
    We create another Boolean feature $f'$ and a decision tree of depth 1, which splits on $f'$, giving output $-k-0.5$ if $f'$ is false and $-k+0.5$ otherwise. 
    We call the ensemble of all these trees to be $\ensemble: \{0,1\}^{n+1} \rightarrow \{0,1\}$ with the trees being $T_i$ where $i$ varies from $0$ to $n-1$ and $T'$.
\begin{claim}
We claim that $\ensemble$ is $\{f'\}$-sensitive iff there exists $U \subseteq \mathcal{U}$ such that $\sum_{l \in U}l = k$. 

\end{claim} 
    To see this, consider a function $S : \{0,1\}^{n+1} \rightarrow \mathcal{P}(\mathcal{U})$ where $\mathcal{P}(\mathcal{U})$ denotes the power set of $\mathcal{U}$ 
    defined as $S(x) = \{u_i \in \mathcal{U} | x_i = 1 $ for some $i \in \{0,1,\dots,n-1\} \}$. By construction of the trees, $\Sigma_{l\in S(x)}l = \Sigma_{i = 0}^{n-1}T_i(x)$ where $x \in \{0,1\}^{n+1}$. If $\ensemble$ is sensitive to $\{f'\}$, then there exist $\first$ and $\second$ such that $\ensemble^{raw}(\first) < 0$ and $\ensemble^{raw}(\second) > 0$ and $\first_{\bot f'} = \second_{\bot f'}$, where $x_{\bot f'}$ refers to input $x$ projected into all features except $f'$. By construction, $\first_{ f'} = 0$ and $\second_{f'} = 1$. Then $\ensemble^{raw}(\first) = \Sigma_{i = 0}^{n-1}T_i(\first) + T'(\first) < 0 \implies \Sigma_{l\in S(\first)}l -k - 0.5 < 0 \implies \Sigma_{l\in S(\first)}l < k + 0.5$. Similarly, $\ensemble^{raw}(\second) = \Sigma_{i = 0}^{n-1}T_i(\second) + T'(\second) > 0 \implies \Sigma_{l\in S(\second)}l -k + 0.5 > 0 \implies \Sigma_{l\in S(\second)}l > k - 0.5$. Also, by construction of $S$, $S(\first) = S(\second)$ since $\first$ and $\second$ only differ on $f'$ and $S$ is independent of that value. Let $S(\first) = S(\second) = U$. Thus, $k-0.5 < \Sigma_{l\in U}l < k + 0.5$. As all numbers are integers, $\Sigma_{l\in U}l = k$ and thus there exists $U \subseteq \mathcal{U}$ such that $\Sigma_{l\in U}l = k$.

    If there exists $U \subseteq \mathcal{U}$ such that $\Sigma_{l\in U}l = k$ then consider $\first$ and $\second$ such that $\first_i = \second_i = 1$ if $u_i \in U$ and $\first_j = \second_j = 0$ if $u_j \notin U$ where $i,j \in \{0,1,\dots,n-1\}$ . Also, $\first_{f'} = 0$ and $\second_{f'} = 1$. Note $S(\first) = S(\second) = U$. Thus, $\ensemble(\first) = \Sigma_{l \in S(\first)}l + T'(\first) = k - k - 0.5 = -0.5 < 0$ and $\ensemble(\second) = \Sigma_{l \in S(\second)}l + T'(\second) = k - k + 0.5 = 0.5 > 0$. Also $\first_{\bot f'} = \second_{\bot f'}$. Thus, $\ensemble$ is sensitive to $\{f'\}$ as the above $\first$ and $\second$ are a required pair of inputs to show sensitivity.

   Thus, by proving in both directions, we have shown if we can solve sensitivity for decision tree ensembles of depth-1, then we can solve integer subset sum problem. Thus, sensitivity is at least as hard as integer subset sum and thus it is NP-Hard for depth 1. 
\end{proof}  

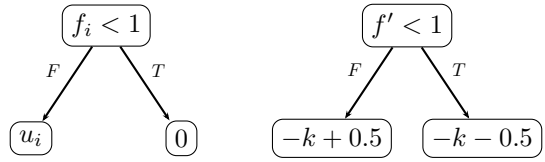
\begin{figure}[h]
    \centering
\begin{tikzpicture}[
  level 1/.style={sibling distance=20mm}, 
  edge from parent/.style={draw, -latex},
  every node/.style={align=center}
]


  \node[draw, rounded corners] (A) at (0, 0){$f_{i} < 1$};
    \node[draw, rounded corners] (B) at (0+1, -1.5){0};
    \node[draw, rounded corners] (C) at (0-1, -1.5){$u_i$};

    \draw [thick, ->] (A) to node[midway, above right, scale=0.7] {$T$} (B);
    \draw [thick, ->] (A) to node[midway, above left, scale=0.7] {$F$} (C);
    

\node[draw, rounded corners] (A) at (4, 0){$f' < 1$};
\node[draw, rounded corners] (B) at (4 + 1, -1.5){$-k-0.5$};
    \node[draw, rounded corners] (C) at (4-1, -1.5){$-k+0.5$};

    \draw [thick, ->] (A) to node[midway, above right, scale=0.7] {$T$} (B);
    \draw [thick, ->] (A) to node[midway, above left, scale=0.7] {$F$} (C);

\end{tikzpicture}
\captionsetup{justification=centering}
    \caption{Trees for Proof of Theorem \ref{thm:weaknp:depth1}}
    \label{fig:np-hard-depth1}
\end{figure}

Finally, we show that when $|\mathcal{F}/F|$ is bounded, we can solve the depth 1 problem in polynomial time.  

\begin{theorem}
\label{thm:bounded-poly}
        The feature sensitivity problem with bounded $|\mathcal{F}/F|$ is solvable in polynomial time for tree ensembles with depth $1$.
\end{theorem}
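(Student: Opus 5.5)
The approach is to exploit the fact that, for depth-1 trees, the ensemble output decomposes additively over features. Each depth-1 tree $T$ splits on a single feature $f_T$ with one threshold $\tau_T$. Group the trees by the feature they test. Then $\ensemble^{raw}_1(x) = \sum_{f \in \mathcal{F}} h_f(x_f)$, where $h_f(v) = \sum_{T : f_T = f} T(v)$. Each $h_f$ is a step function of one real variable whose breakpoints are the thresholds of the trees on $f$, so it takes at most $m_f+1$ distinct values, where $m_f$ is the number of trees on $f$. All the $h_f$ can be tabulated in time polynomial in the ensemble size: sort the thresholds for $f$, evaluate one representative point in each interval, and record that interval's value.

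Write the raw score as $A(x_{\mathcal{F}\setminus F}) + B(x_F)$, with $A = \sum_{f\notin F} h_f$ and $B = \sum_{f\in F} h_f$. By monotonicity of the sigmoid, the conditions $\ensemble^{prob}_1(\first) \geq 0.5+g$ and $\ensemble^{prob}_1(\second)\leq 0.5-g$ become $A + B(\first_F) \geq \theta$ and $A + B(\second_F) \leq -\theta$, where $\theta = \textsc{Sigmoid}^{-1}(0.5+g)$; the case $g \geq 0.5$ is trivially infeasible. The two points share the same $A$ value, while their $F$-parts are independent. So the features in $F$ separate completely: $B$ is a sum of independent univariate functions, and
\[
B_{\max} = \sum_{f\in F} \max_v h_f(v), \qquad B_{\min} = \sum_{f\in F} \min_v h_f(v).
\]
Both are computable in polynomial time by taking, for each feature, the maximum and minimum over its at most $m_f+1$ intervals. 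This holds regardless of $|F|$, so the NP-hardness in Theorem~\ref{thm:weaknp:depth1} does not come from $F$; it comes from the shared features in $\mathcal{F}\setminus F$.

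For the shared features we use the boundedness of $|\mathcal{F}\setminus F| = c$. The function $A$ depends only on which interval each of these $c$ features falls in. Enumerate all combinations, at most $\prod_{f\notin F}(m_f+1) \leq (|\mathcal{T}|+1)^c$ of them, which is polynomial for constant $c$. For each combination, compute the value $a$ of $A$ and accept if $a + B_{\max} \geq \theta$ and $a + B_{\min} \leq -\theta$. Correctness in both directions is immediate. A witness pair yields an interval combination that passes the test, since $B(\first_F)\le B_{\max}$ and $B(\second_F)\ge B_{\min}$. Conversely, a passing combination gives a counterexample pair: pick a representative point for the shared features, take $\first_F$ as the per-feature maximizers and $\second_F$ as the per-feature minimizers.

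The main obstacle is minor bookkeeping rather than a conceptual step. The input domain $\mathcal{X}$ may restrict feature values, for example to Booleans as in the reduction; then only the intervals that intersect the domain should be enumerated, which is assumed to be checkable per feature. Strictness at the thresholds must also be handled, which is dealt with by choosing interior representatives. In fact the enumeration over shared features could be replaced by taking the best $a$, but this is not possible in general, because both inequalities must hold for the same $a$. That is exactly why the enumeration, and hence the bound on $|\mathcal{F}\setminus F|$, is needed.
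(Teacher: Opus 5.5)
Your proposal is correct and takes essentially the same route as the paper: group the depth-1 trees by the feature they split on, and compute the per-feature maxima and minima of the sensitive-feature contribution independently. Then enumerate the $O((|\mathcal{T}|+1)^{c})$ interval combinations of the bounded set of shared features, checking whether some achievable shared sum falls in the required window. Your version is somewhat more careful than the paper's, which phrases the window as lying between $-M$ and $-m$ (effectively the case $g=0$): you handle a general gap via $\theta=\textsc{Sigmoid}^{-1}(0.5+g)$ and also address domain restrictions and thresholds.
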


\begin{proof}
    Given a sensitivity problem with decision tree ensemble of trees of depth 1 $\ensemble = \{T_0, T_1, \dots\}$, feature set $\mathcal{F}$, features for checking sensitivity against $F$ and a scalar $k$ such that $|\mathcal{F}/F| \leq k$, we can solve the problem in polynomial time. Let $|\ensemble| = m$ i.e. there are total $m$ decision trees and $|\mathcal{F}| = n$ i.e there a total of $n$ features.

    As each tree splits on only 1 feature, we can create a set of trees corresponding to each feature. Thus, we create a function $S(f)$ from the set of feature space to a subset of all trees. Note, it is a partition of all the trees as each tree will split on exactly one feature.

    For all $f \in F$, consider the set $S(f)$. We will calculate the minimum and maximum value of sum output of trees in $S(f)$ and their corresponding features values. There are a total of $|S(f)|+1$ distinct possible values and thus can be found in $O(m)$ time.

    We will add the minimums and maximums found above and get a global minimum and maximum value say $m$ and $M$. We need to find whether there exists an assignment to features in $\mathcal{F}/F$ such that the sum of output corresponding to trees of these features lies between $-M$ and $-m$.

    For each feature $f \in \mathcal{F}/F$, the possible number of distinct outputs is $|S(f)|+1$. Therefore, the total possible number of outputs are $\prod_{f \in \mathcal{F}/F} (S(f)+1)$ which is $O(m^k)$. Thus, by checking for all possible outputs, we can find whether such output exists or not. If it does, then the ensemble is sensitive otherwise it isn't. Thus we have a polynomial time algorithm as $k$ is not a parameter but a bound.
\end{proof}



\section{Data-Aware Sensitivity Verification}
\label{sec:data-aware}

Sensitivity, as defined in Definition \ref{def:orig-sensti}, requires finding a counterexample pair showing sensitivity, but does not specify how close the inputs $x^{(1)}$ and $x^{(2)}$ in the pair must be to real-world data. Indeed, this is not surprising, as the definition is itself independent of data (including training data), and only depends on the trained model. But, as a consequence, counterexample pairs may include inputs that are arbitrarily far from the true data distribution, as illustrated in Figure~\ref{fig:eg-data-aware} in the Introduction. Additional examples are in  Appendix~\ref{appsec:tabular-examples}. 
Our goal, therefore, is to find more meaningful counterexample pairs, towards which we extend Definition \ref{def:orig-sensti} with a utility function.

\begin{definition}
\label{def:prob-sensti}
    Given a (binary) tree ensemble classifier $\ensemble: \mathcal{X} \longrightarrow \{0, 1\}$, a set of sensitive features $F\subseteq \mathcal{F}$, a gap parameter $g\geq 0$ and a data distribution/utility function $u: \mathcal{X} \times \mathcal{X} \to [0,1]$, we say that $\ensemble$ is $(g,F, u)$-sensitive, if there exist two inputs $\first, \second\in \mathcal{X}$ such that $\first_{\mathcal{F}\setminus F} = \second_{\mathcal{F}\setminus F}$, $\sigma(\ensemble(\first))\geq 0.5+g$, $\sigma(\ensemble(\second))\leq 0.5-g$ and $u(\first, \second)$ is maximal among all such pairs. 
\end{definition}
We could also add a threshold parameter $\epsilon\in [0,1]$ and require $u(\first,\second)\geq \epsilon$. Typically, $u$ serves as a proxy for how similar $\first, \second$ are to the training distribution. Given a (possibly training) dataset $\mathcal{D}$, we want the utility function $u: \mathcal{X}\times \mathcal{X} \to [0,1]$ to represent the likelihood of the input pair being drawn from or close to $\mathcal{D}$. In this work, we investigate two approaches to achieve this.

{\bf Utility Function.}
 For simplicity, we first assume that all input features are independent. This allows us to calculate the likelihood of each feature independently and then multiply the results.
For a given feature $f$, consider the guards in the ensemble that involve $f$. 
Suppose a feature $f$ appears in $K_f$ guards, with sorted thresholds $\tau_{f1} <  \dots < \tau_{f{K_f}}$.
We assume that the $X_f$ takes value within range $[\tau_{f1},\tau_{f{K_f}})$, which we can ensure  
by introducing guards $X_f < -\infty$ and $X_f < \infty$.
This partitions the space of feature $f$ into $K_f-1$ intervals. 
We estimate the marginal probability of $f$ lying in each interval $[\tau_{f(k-1)}, \tau_{fk})$ by calculating the count of points in $\mathcal{D}$ for which $f$ lies 
in $[\tau_{f(k-1)}, \tau_{fk})$ and dividing this by the total count of points in $\mathcal{D}$. That is, for any feature $f$ and real value $v$, 
we have, 
$$\pi_f(v) = \sum^{K_f}_{k=2}\left(\mathbf{1}_{(\tau_{f(k-1)} \leq v < \tau_{fk})}\cdot\frac{|\{x\in\mathcal{D}\mid \tau_{f(k-1)} \leq x_f < \tau_{fk})\}|}{|\mathcal{D}|}\right)$$
And for any input $x = (x_1, x_2, x_3, \dots, x_d)$, assuming independence of features, we define 
$\pi(x) =  \pi_{f_1}(x_1)\cdot \pi_{f_2}(x_2) \dots \pi_{f_n}(x_d)$, where 
$\pi: \mathcal{X} \to [0,1]$ is the product distribution estimated from
$\mathcal{D}$. With this the utility function just becomes 
$u(\first, \second) = \pi(\first) \cdot \pi(\second)$.

Intuitively, under the independence assumption, it measures how likely inputs are to be drawn from the distribution $\mathcal{D}$, guiding our approach to look for more meaningful counterexamples. In the next section, we show how this can be encoded effectively, and our experiments indicate that in many benchmarks it does give better counterexample pairs (i.e., closer to data). However, its effectiveness diminishes in datasets with high feature dependencies, which motivates an orthogonal approach.
 
 {\bf Restricting search space using clause summaries.}
 Our second approach attempts to compute {\em summaries} of the input space that describe \emph{cavities} - regions where no data points exist. 
 \begin{figure}
  \begin{center}
  \begin{tikzpicture}[scale=.8,-,thick]
    \node[] (o) {};
    \node[above of=o,yshift=15mm] (y) {};
    \node[right of=o,xshift=15mm] (x) {};
    \path[->] (0,0) edge
    node[left,yshift=10mm] {$f_i$}
    node[left,yshift=2mm] {$ub_i$}
    node[left,yshift=-5mm] {$lb_i$}
    (y);
    \path[->] (0,0) edge
    node[below,xshift=10mm] {$f_j$}
    node[below,xshift=2mm] {$ub_j$}
    node[below,xshift=-5mm] {$lb_j$}
    (x);
    \fill[fill=green,opacity=0.2] 
    (10mm,10mm) -- ++(10mm,0mm) -- ++(0,10mm) -- ++(-10mm,0mm) -- cycle;

    \path[dashed] (0,10mm) edge ++(10mm,0);
    \path[dashed] (0,20mm) edge ++(10mm,0);

    \path[dashed] (10mm,0) edge ++(0,10mm);
    \path[dashed] (20mm,0) edge ++(0,10mm);
    
    \node at (21mm,23mm) {$\bullet$};
    \node at (14mm,28mm) {$\bullet$};
    \node at (28mm,12mm) {$\bullet$};
    \node at (7mm,3mm) {$\bullet$};
    \node at (9mm,16mm) {$\bullet$};
    \node at (21mm,18mm) {$\bullet$};
    \node at (16mm,9mm) {$\bullet$};
    \node at (12mm,22mm) {$\bullet$};
    \node at (25mm,5mm) {$\bullet$};
    \node at (5mm,25mm) {$\bullet$};
    \node at (25mm,26mm) {$\bullet$};
  \end{tikzpicture}
\end{center}
\caption{There are no training set data points within the green box.}
\label{fig:cavity}
  \end{figure}
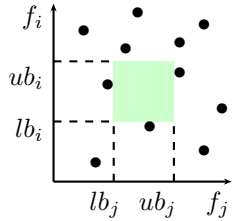
Our goal is to ensure that these cavities are excluded from our sensitivity search. For simplicity, we focus on cavities represented as a bounded boxes in the input space.
Given a value of $w$ (a width parameter), we create the following template for points in $\mathcal{D}$ that fall in a cavity:
\begin{equation}
  \label{eq:clause-template}
  \bigwedge_{i \in [1,w]} (X_{f_i} \geq lb_i \land X_{f_i} < ub_i)
\end{equation}
where each $lb_i$ and $ub_i$ take values from one of the guards associated with feature $f_i$ appearing in the tree ensemble.
In Figure~\ref{fig:cavity}, we illustrate a green cavity in 2-D space, where all the data points are projected in dimensions $f_i$ and $f_j$.
Therefore, we avoid finding sensitivity pairs in $(X_{f_i} \geq lb_i \land X_{f_i} < ub_i) \land (X_{f_j} \geq lb_j \land X_{f_j} < ub_j)$.
The difficulty is to find such cavities in the data set.
For this, we observe that given a box which is a conjunction of interval regions, its negation is a clause, in a form that can be processed by a constraint solver. Hence, our main idea is to find such cavities in the input data-set using a state-of-the-art Satisfiability Modulo Theories (SMT) solver ~\cite{Z3}, as we detail in Section~\ref{sec:milp-data-aware}. 

\section{An Improved MILP Encoding}
\label{sec:encoding}
We build on the MILP encoding for decision trees introduced by \citet{kantchelian}. The encoding, when used directly for sensitivity verification, is less efficient than the pseudo-Boolean approach as shown in \citet{ahmad2025sensitivity}. However, we develop {\em novel} optimizations to the encoding for sensitivity analysis, which allow the MILP encoding to outperform the pseudo-Boolean encoding by a large margin, achieving an order-of-magnitude runtime reduction compared to \senspb.

\paragraph{Base Encoding.} The encoding represents the decision tree ensemble as a set of linear inequalities. It uses a set of binary variables $p_{fk}$ to denote the predicates that appear on the internal node's guard is true, and a set of continuous variables $0 \leq l_{n}\leq 1$ to denote which leaf node is visited in each tree. The output of the tree ensemble is then computed as a linear combination of the leaf values, weighted by the predicate variables.
For each input feature $f$, we ensure consistency across the predicate variables corresponding to the feature, since if $\tau_1 < \tau_2$, then $X_f < \tau_1$ implies $X_f < \tau_2$.
Let the corresponding predicate variables be $p_{f1}, p_{f2}, \dots, p_{f{K_f}}$.
We require that $p_{f1} = 1 \implies p_{f2} = 1$, $p_{f2} = 1 \implies p_{f3} = 1 \dots p_{f({K_f}-1)} = 1 \implies p_{f{K_f}} = 1$. 
We encode this in MILP as Eq.~(\ref{predicate_consistency}).

\begin{equation}
     p_{f1} \leq p_{f2} \leq \dots \leq p_{f{K_f}}
    \label{predicate_consistency}
\end{equation}
\begin{equation}
    l_{1} + l_{2} + \dots + l_{N} = 1
    \label{leaf_consistency_1}
\end{equation}

Let $l_{1}, l_{2}, \dots, l_{N}$ be the leaf variables corresponding to a tree.  We require two leaf consistency conditions. First, we require that exactly one of these leaf variables is set to 1 and every other leaf variable is set to 0, which can be enforced as in Eq.~(\ref{leaf_consistency_1}).
Second, we require that if a leaf variable is set to 1, then every predicate variable in the path to the leaf node 
needs to be set such that the path is followed. 
For each internal node $n$ of $t$th tree, consider the set $TSet$ of leaf nodes in the subtree rooted at $n.yes$ and set $FSet$ of the leaf nodes of subtree rooted at $n.no$. 
Let $X_{f} < \tau_{fk}$ be the guard of node $n$ (recall that $X_f$ refers to variables while $x_f$ are concrete input values). If $n$ is a root node, then we add the constraint given in Eq.~(\ref{root_const}), and for any non-root node, the constraint in Eq.~(\ref{non_root_const}).

\begin{equation}
    1 - \sum_{n \in FSet} l_{n} = p_{fk} = \sum_{n \in TSet} l_{n}
    \label{root_const}
\end{equation}
\begin{equation}
    1 - \sum_{n \in FSet} l_{n} \geq p_{fk} \geq \sum_{n \in TSet} l_{n}
    \label{non_root_const}
\end{equation}

The constraints in Eq.~(\ref{predicate_consistency}) to Eq.(\ref{non_root_const}) are from \citet{kantchelian}.
To model the sensitivity problem, we create two instances of all the variables to encode the runs of the two differentiating inputs given to the tree ensemble. We will add superscripts to indicate the copies. We need to ensure that the two inputs differ only in the specified set of features $F\subseteq \mathcal{F}$. For this, denoting by $\mathcal{V}_F$ the set of all predicate variables such that their guards contain some $f\in F$, we add the constraint in Eq.~(\ref{eq:same}). Finally, for binary trees, $\ensemble^{prob}(x) > 0.5 + g \Longleftrightarrow \ensemble^{raw}(x) > \textsc{Sigmoid}^{-1}(0.5+g)$. Let us define $\delta=\textsc{Sigmoid}^{-1}(0.5+g)$. Because of the symmetry of $\textsc{Sigmoid}$ about $y=0.5$, we have that $\textsc{Sigmoid}^{-1}(0.5-g) = -\delta$. Thus, we introduce the constraint described in Eq.~(\ref{gap}).
Recall that for a leaf node $n$, $n.val$ denotes its value.  Let $\allleafs$ be the set of all leaves in all trees.


\begin{equation}\label{eq:same}
    \bigwedge_{p_{fk}^{(1)} \not \in \, \mathcal{V_{F}}}p_{fk}^{(1)} = p_{fk}^{(2)}  
\end{equation}
\begin{equation}\tag{Gap-bin}
        \sum_{n \in \allleafs} l_{n}^{(1)} n.val \geq \delta  \land \sum_{n \in \allleafs} l_{n}^{(2)} n.val \leq -\delta
\label{gap}
\end{equation}

Any feasible solution to these constraints corresponds to two inputs $\first$ and $\second$, which differ only in the feature set $F$ and produce outputs such that $\ensemble (\first) \geq 0.5 + g$ and  $\ensemble (\second) \leq 0.5 - g$

\noindent{\bf Optimizations to the Encoding.} 
We now describe the novel optimizations that we develop and prove their corrections. Subsequently, we will also explain how we incorporate data-awareness

\subsection{Constraints on Unaffected and Affected Leaves}
For each leaf, we call the set of all the guards in the path from the root to the leaf as the ancestry of that leaf. A leaf is called \textit{unaffected} if, for each guard in the ancestry of the leaf, the guard predicate does not contain a feature from the set of varying features. Let $\unaff$ denote the set of indices of all unaffected leaves. For each such leaf, we add a constraint on the two variables $l_{n}^{(1)}, l_{n}^{(2)}$ as defined in Eq.~(\ref{unaffected_cons}).  Intuitively, if a leaf is reached in a run of the first input, then it will also be reached in the second. Adding this constraint explicitly helps the solver reach a feasible solution faster, especially in cases where the sensitive feature only affects a small subset of the leaves. This is particularly important for features that are present in guards that are farther away from root nodes. In practice, a large fraction of leaves belong to $\unaff$.

Next, given that a significant fraction of the leaves are unaffected in practical scenarios, we also add constraints to ensure that Leaf variables that are not in $\unaff$ (i.e., they correspond to ``affected" leaves) are capable of influencing the output. This is done by subtracting the two constraints in Eq.~(\ref{gap}) and using Eq.~(\eqref{unaffected_cons}) to remove any terms corresponding to unaffected leaves, leading to the constraints in Eq.~(\ref{affected_cons}). This constraint has significantly fewer terms than Eq.~(\ref{gap}) while capturing its essence, leading to significantly faster running times.

\begin{equation}\tag{UnAff}
    \bigwedge_{l_{n} \in \, \unaff} \, l_{n}^{(1)} = l_{n}^{(2)} \label{unaffected_cons}
\end{equation}
\begin{equation}\tag{Aff-bin}
    \sum_{l_n\not\in \,\unaff}\,\left(l_{n}^{(1)} n.val -  l_{n}^{(2)} n.val \right)  \, \geq 2\times \delta
    \label{affected_cons}
\end{equation}

Crucially, as the following theorem \ref{thm:optimisations-are-sound} shows, adding these optimizations does not change the set of feasible solutions.

\begin{theorem}

    Let $\mathcal{C}$ denote the MILP equation obtained as a conjunction of Equations ~(\ref{predicate_consistency}), (\ref{leaf_consistency_1}), (\ref{root_const}), (\ref{non_root_const}),(\ref{eq:same}) and (\ref{gap}). The set of feasible solutions of $\mathcal{C}$  and the MILP obtained by conjuncting $\mathcal{C}$ with the constraints induced by Eq.~(\ref{unaffected_cons}) and Eq.~(\ref{affected_cons}) are equal.

    \label{thm:optimisations-are-sound}
\end{theorem}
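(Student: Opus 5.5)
One inclusion is immediate: adding constraints can only shrink the feasible set, so every feasible solution of $\mathcal{C}\wedge(\ref{unaffected_cons})\wedge(\ref{affected_cons})$ is feasible for $\mathcal{C}$. The work is the converse. We take an arbitrary feasible solution of $\mathcal{C}$ and show that it already satisfies (\ref{unaffected_cons}) and (\ref{affected_cons}).

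\textbf{Step 1: integrality of the leaf variables.} We first show that in any feasible solution of $\mathcal{C}$ every leaf variable $l_n^{(j)}$ is $0$ or $1$, even though it is declared continuous. Fix a tree and a copy $j$, and use the fact that the predicate variables are binary. By induction from the root downward, we show that the constraints (\ref{root_const}) and (\ref{non_root_const}) force $\sum_{n\in TSet} l_n$ to be $0$ whenever $p_{fk}=0$, and force $\sum_{n\in FSet} l_n$ to be $0$ whenever $p_{fk}=1$. Then all the mass of (\ref{leaf_consistency_1}) lies in the subtree selected by the binary predicates, which in the end is a single leaf. That leaf therefore gets value $1$ and every other leaf gets $0$. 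So $l_n^{(j)}=1$ exactly on the leaf reached by following the predicate values along its ancestry.

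\textbf{Step 2: (\ref{unaffected_cons}).} Let $n$ be an unaffected leaf. Every guard in its ancestry uses a feature outside $F$, so each such predicate variable is not in $\mathcal{V}_F$. By (\ref{eq:same}) its value is therefore the same in both copies. By Step~1, $l_n^{(j)}=1$ holds exactly when every ancestry predicate takes the value dictated by the path. Both copies meet this condition together, so $l_n^{(1)}=l_n^{(2)}$.

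\textbf{Step 3: (\ref{affected_cons}).} Subtract the second inequality of (\ref{gap}) from the first:
\[
\sum_{n\in\allleafs}\bigl(l_n^{(1)}-l_n^{(2)}\bigr)\,n.val \ \ge\ 2\delta .
\]
By Step~2, every term with $n\in\unaff$ is zero. What remains is exactly (\ref{affected_cons}).

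The main obstacle is Step~1. The constraints at non-root nodes are inequalities, and the leaf variables are continuous, so we must check carefully that fractional leaf mass cannot leak into a subtree whose guard predicate is false. The plan is to prove by induction on depth that the total leaf mass in the subtree of any node off the active path is $0$, and that the total leaf mass in the subtree of any node on the active path is $1$.

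If the exact inductive claim fails, there is a fallback for Step~2 that avoids full integrality: instead show that $l_n^{(j)}\le p$ (or $\le 1-p$) for each ancestry predicate, and argue that the unaffected leaves are determined by the shared predicates.
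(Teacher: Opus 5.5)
Your proof is correct and follows the paper's structure. Like the paper, you first show that (UnAff) is already implied by the base constraints, then subtract the two inequalities of (Gap-bin) and drop the vanishing unaffected terms to obtain (Aff-bin). There are two minor differences. The paper proves (UnAff) by contradiction at the node where the active leaves of the two copies diverge, while you argue directly that an unaffected leaf is active iff its shared path predicates match. The paper also just asserts that the leaf variables are $0$/$1$, citing the root constraint alone, which by itself does not give this. Your top-down induction in Step~1, using $p\ge\sum_{TSet} l_n$, $1-p\ge\sum_{FSet} l_n$ and nonnegativity, actually proves that integrality, so it tightens the paper's argument, and your fallback is unnecessary.
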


\begin{proof}
    We first prove that \eqref{unaffected_cons} is already subsumed by \eqref{predicate_consistency}-\eqref{eq:same}. This is established using a proof by contradiction. Assume to the contrary and let $n\in \unaff$ such that $l_n^{(1)} \neq l_n^{(2)}$. Since the leaf variables are forced to be either $0$ or $1$ (by \eqref{root_const}), assume without loss of generality that $l_n^{(1)} = 1$ and $l_n^{(2)} = 0$.
    
    By \eqref{leaf_consistency_1}, there exists leaf $n'$ such that $l_{n'}^{(2)} = 1$ and the leaves $n$ and $n'$ belong to the same tree. Let $n''$ be the last common node in the paths from the root to leaves $n$ and $n'$, respectively. Let $n''$ be labeled with with $X_f < \tau_k$ and $p_{fk}$ be the corresponding predicate. Since $n''$ is in the ancestry of $n'$ and $n \in \unaff$, we have that $p_{fk}^{(1)} = p_{fk}^{(2)}$ (by \eqref{eq:same}).

    Without loss of generality, assume that leaf $n$ is present in the subtree rooted at $n''.no$, while leaf $n'$ is present in the one rooted at $n''.yes$.
    Since $l_n^{(1)} = 1$, \eqref{non_root_const} implies
    that $1 - 1 \geq p_{fk}^{(1)} \implies p_{fk}^{(1)} = 0$.
    At the same time, since $l_{n'}^{(2)} = 1$, \eqref{non_root_const} implies that $p_{fk}^{(2)} \geq 1 \implies p_{fk}^{(2)} = 1 \neq  p_{fk}^{(1)}$ leading to a contradiction. Hence, $l_{n}^{(1)} = l_{n}^{(2)} \,\forall i \in \unaff$ is implied by \eqref{predicate_consistency}-\eqref{eq:same} and hence the set of feasible solutions does not change on the addition of \eqref{unaffected_cons}.

    As mentioned earlier, we show that \eqref{unaffected_cons} and \eqref{gap} together imply \eqref{affected_cons}. Subtracting the two inequalities in \eqref{gap}, we get that when \eqref{gap} holds, then  $\sum\,l_n^{(1)} n.val -  l_n^{(2)} n.val  \, \geq 2\times \delta$.

    However, for the leaves belonging to $\unaff$, the difference terms are $0$ by definition, i.e. $\sum_{n\in \,\unaff}\,l_n^{(1)} n.val -  l_n^{(2)} n.val  \, = 0$. Using these two equations we conclude that 
    if \eqref{unaffected_cons} holds and \eqref{gap} holds, then we must have $\sum_{n\not\in \,\unaff}\,l_n^{(1)} n.val -  l_n^{(2)} n.val  \, \geq 2\times \delta$, which completes the proof.\end{proof}

\subsection{Objective Function}
Modern MILP solvers are built to optimize over an objective function and are highly engineered with multiple heuristics to traverse the search space while being mindful of the objective function. For our setting, we just need to find a single feasible solution to the set of constraints $\mathcal{C}$ as defined in Theorem~\ref{thm:optimisations-are-sound}.
Among these constraints, Eq.~(\ref{gap}) captures the essence of the two outputs being different and reduces the set of feasible solutions by a significant amount.
We can utilize the objective function to amplify the importance of this constraint for the solver by adding a constraint in Eq.~\ref{Obj}), capturing the difference between the two equations in Eq.~(\ref{gap}). \begin{equation}\tag{Obj-bin} 
\label{Obj}
  \textsc{Max}\, \sum_{n \in \allleafs} \, \left(l_{n}^{(1)} n.val -  l_{n}^{(2)} n.val \right)
\end{equation}
This addition leads to significant improvement in running times, as the objective function can guide the MILP solver in choosing the better edge when faced with multiple candidates during the simplex-solving process instead of arbitrarily choosing between each of the available constraints. 
In Appendix~\ref{appsec:example}, we present the full encoding of an illustrative example.

\subsection{Modification in MILP for Data-Aware search}
\label{sec:milp-data-aware}
Finally, we modify the MILP encoding to solve data-aware sensitivity as defined in Def. \ref{def:prob-sensti}.

\noindent {\bf Utility Function.}
Given the utility function as defined in Section~\ref{sec:data-aware}, we replace the objective function in Eq.~(\ref{Obj}) by the following objective, which maximizes the value of the utility function:

\begin{equation}
  \textsc{Max} \sum_{f \in \mathcal{F}}\sum^{K_f}_{k=2} (\log(\pi_{f}(\tau_{f{(k-1)}})) - \log(\pi_{f}(\tau_{f{k}})))(p^{(1)}_{fk}+p^{(2)}_{fk}).
    \label{eq:obj-data-aware}
\end{equation}

Importantly, we convert the product of marginals to log values, as MILP solvers only handle additive constraints on the objective function. 
The following lemma establishes the correctness of the data-aware objective function given in \eqref{eq:obj-data-aware}.

  \begin{lemma}
    \label{thm:obj-data-aware}
    The objective function in~\ref{eq:obj-data-aware}
    maximizes $u(\first,\second)$.
  \end{lemma}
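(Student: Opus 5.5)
The plan is to show that, on every feasible solution of the MILP, the objective in \eqref{eq:obj-data-aware} equals $\log u(\first,\second)$ plus a constant that does not depend on the solution. Since $\log$ is strictly increasing, maximizing the objective then maximizes $u(\first,\second)=\pi(\first)\pi(\second)$. Because $u$ is a product over the two inputs and over features, I will work with one feature $f$ and one copy $x\in\{\first,\second\}$, and then sum.

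\textbf{Step 1 (reading off the interval from the predicates).} I fix $f$ with thresholds $\tau_{f1}<\dots<\tau_{fK_f}$. By the range assumption of Section~\ref{sec:data-aware}, $x_f$ lies in exactly one interval $[\tau_{f(j-1)},\tau_{fj})$ with $2\le j\le K_f$. Since $p_{fk}=1$ iff $x_f<\tau_{fk}$, we get $p_{fk}=0$ for $k<j$ and $p_{fk}=1$ for $k\ge j$. This is exactly the monotone pattern allowed by \eqref{predicate_consistency}. Conversely, every assignment satisfying \eqref{predicate_consistency} (with $p_{fK_f}=1$ forced by the range) determines a unique such interval index $j$. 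So the $j$ with $p^{(\cdot)}_{fj}=1$ and $p^{(\cdot)}_{f(j-1)}=0$ identifies the interval containing $x_f$.

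\textbf{Step 2 (telescoping).} I write $a_k:=\log\pi_f(\tau_{f(k-1)})$, which is the log of the empirical mass of the interval whose left endpoint is $\tau_{f(k-1)}$, i.e.\ the $(k-1)$th interval. The coefficient of $p_{fk}$ is then $a_k-a_{k+1}$. The contribution of feature $f$ and copy $x$ is
\[
\sum_{k=2}^{K_f}(a_k-a_{k+1})\,p_{fk}=\sum_{k=j}^{K_f}(a_k-a_{k+1})=a_j-a_{K_f+1}.
\]
Here $a_j$ corresponds to the interval immediately adjacent to the one containing $x_f$. I will fix the indexing so that this term equals $\log\pi_f(x_f)$; this may require shifting the index by one so that the interval $[\tau_{f(j-1)},\tau_{fj})$ carries $p_{fj}$. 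The boundary term $a_{K_f+1}$ is a constant, because $p_{fK_f}=1$ always. Either it can be dropped or it is defined by convention, for instance as $\log 1=0$ through the added guard $X_f<\infty$. Summing over $f$ and over both copies then gives objective $=\log\pi(\first)+\log\pi(\second)+C=\log u(\first,\second)+C$.

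\textbf{Step 3 (conclusion and edge cases).} Since $C$ is independent of the solution and $\log$ is monotone, the argmax of \eqref{eq:obj-data-aware} over the feasible set of Theorem~\ref{thm:optimisations-are-sound} coincides with the argmax of $u$ among valid counterexample pairs, as required by Definition~\ref{def:prob-sensti}.

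I expect the main obstacle to be the bookkeeping rather than any real mathematics. Two points need care. First, the off-by-one alignment between $\pi_f(\tau_{f(k-1)})$ and the interval in which $p_{fk}$ first switches on must be checked so that the telescoping lands exactly on $\log\pi_f(x_f)$. Second, zero-mass intervals give $\log 0=-\infty$. For these I would either use the convention that such intervals receive a very large negative weight, which still preserves the argmax whenever some pair has $u>0$, or note that they are excluded from the search.
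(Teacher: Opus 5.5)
Your route is the same as the paper's. You take the smallest index $j$ with $p_{fj}=1$ and use the chain $p_{f1}\le\dots\le p_{fK_f}$ to telescope the feature-$f$ contribution to $\log\pi_f(\tau_{f(j-1)})-\log\pi_f(\tau_{fK_f})$. You then discard the second term as a constant, identify the first with $\log\pi_f(x_f)$, and sum over features and both copies.

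The gap is in that identification, which is where the lemma has content. You do not carry it out, and what you say about it is wrong. You claim $a_j$ ``corresponds to the interval immediately adjacent to the one containing $x_f$'' and that a shift by one may be needed. By your own definitions, $a_j=\log\pi_f(\tau_{f(j-1)})$ is the log-mass of the interval with left endpoint $\tau_{f(j-1)}$, namely $[\tau_{f(j-1)},\tau_{fj})$. Your Step 1 already shows this is the interval containing $x_f$, because $p_{f(j-1)}=0$ and $p_{fj}=1$. Since $\pi_f$ is constant on each half-open interval $[\tau_{f(k-1)},\tau_{fk})$ and the left endpoint lies in it, $\pi_f(\tau_{f(j-1)})=\pi_f(x_f)$. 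So the telescoping lands exactly on $\log\pi_f(x_f)$ with no reindexing. This is the step in the paper's proof that replaces $\tau_{f(k-1)}$ by $x^{(1)}_f$ because $x^{(1)}_f\in[\tau_{f(k-1)},\tau_{fk})$.

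Also, ``fixing the indexing'' is not available to you. The objective is fixed by the statement. If there were a genuine off-by-one misalignment, the lemma would be false for that objective, and a reindexed argument would prove something about a different objective.

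With the identification stated correctly, the proof is complete. Your caveat about zero-mass intervals ($\log 0=-\infty$) is a valid point the paper's proof does not address. A convention is likewise needed for the dropped boundary term: under the literal definition $\pi_f(\tau_{fK_f})=0$, since no half-open interval contains its right endpoint. Both you and the paper implicitly treat that term as a finite constant.
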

  \begin{proof}
  Let $p^{(1)}_{fk}$ be true for smallest $k$.
  Due to the consistency constraints, all $p^{(1)}_{f(k+1)},...,p^{(1)}_{f{K_f}}$ are true.
  Let $p^{(2)}_{fk'}$ be true for smallest $k'$.
  Therefore, the sum will reduce to
  $\sum_{f\in \mathcal{F}}\log(\pi_{f}(\tau_{f{(k-1)}}))+\log(\pi_{f}(\tau_{f{(k'-1)}})) - 2\log(\pi_{f}(\tau_{f{K_f}}))$.
  We may ignore the last term as it is a constant and
  we may replace $\tau_{f{(k-1)}}$ by $\first_f$ because $\first_f \in [\tau_{f(k-1)},\tau_{fk})$
  and
  $\tau_{f{(k'-1)}}$ by $\second_f$ because $\second_f \in [\tau_{f(k'-1)},\tau_{fk'})$.
  Therefore, the total sum will be  $\sum_{f\in \mathcal{F}} \log(\pi_{f}(\first_f)) + \log(\pi_{f}(\second_f))$,
  Since the objective function is maximizing the sum,
  it is maximizing our utility function 
  $u(\first,\second) = \Pi_{f\in \mathcal{F}} \pi_{f}({\first_f})\cdot \pi_{f}({\second_f})$.    
  \end{proof}

\paragraph{Computing clause summaries.} Next we define constraints that can identify cavities in data and their negations, i.e., clauses that guide the sensitivity search.
In Eq.~(\ref{eq:clause-template}), we need to learn the features
and their bounds. 
Let $r_{if}$ be a Boolean variable  indicating $f_i$ is $f$, and
$s_{ik}$ indicating $lb_i = \tau_{fk}$, 
and
$t_{ik}$ indicating $ub_i = \tau_{fk}$, where $\tau_{fk}$ is the $k$-th guard for feature $f$.
For each $x \in \mathcal{D}$, we add:
\vspace{-1mm}
\[
\bigvee_{i \in [1,w]} \left( \bigwedge_{\hspace{-3mm}\mathrlap{f \in \mathcal{F},\; k,k' \in [1,K_f]}} \left( r_{if} \land s_{ik} \land t_{ik'} \rightarrow \left(( x_f < \tau_{fk}\lor x_f \geq \tau_{fk'}) \right) \right) \right)
\]
To avoid redundancies, we enforce the ordering of features: $i < j \rightarrow f_i < f_j
$.
While solving the constraints to find a clause that satisfies all samples,  
we also add an objective function to guide towards learning tight clauses as:
$
\textsc{Min} \; \left( \sum_{i \in [1,w]}(\sum_{k=1}^{|K|}k \cdot s_{ik}-\sum_{k'=1}^{|K|} k' \cdot t_{ik'}) \right)
$, where $K = max_{f}(K_f)$. 
This ensures that we select the smallest guard $k$ for the lower bound of some feature and the largest guard $k'$ for the upper bound of the feature.
We iteratively compute one clause at a time and add constraints to exclude solutions corresponding to previously computed clauses.
Initially, we learn clauses of size one and progressively increase the size up to a user-defined limit.
The computed clauses provide a summary of $\mathcal{D}$, capturing how the data is distributed across the input space and add the learned clauses to the constraints.
Any solution to the query is required to satisfy these clauses, thereby making it more likely to align with the data.

\section{Extension to Multi-class Tree Ensembles}
\label{sec:multifeat}

We extend our formalism and encoding to the multi-class setting. Let $\mathcal{Y} = \{0, 1, \dots, C-1\}$ denote the set of $C$ classes in a multiclass tree ensemble. The set of trees $\mathcal{T}$ is partitioned into $C$ equal partitions, one for each class denoted by $\mathcal{T}_0, \mathcal{T}_1, \dots, \mathcal{T}_{C-1}$.
The trees in a partition $\mathcal{T}_c$ are one-vs-rest classifiers for the class $c$; that is, they consider class $c$ as the positive class, and everything else combined as the negative class and train like a binary classifier. Formally, $\ensemble^{raw}_{c}(x) = \sum_{T\in\mathcal{T}_c} T(x)$, and $\ensemble^{prob}_{c}(x) = \textsc{Softmax}_c\left(\ensemble^{raw}_{0}(x), \dots, \ensemble^{raw}_{C-1}(x)\right)$, 
where $\textsc{Softmax}_c : \mathbb{R}^{C} \xrightarrow{}\mathbb{R}$ is defined as $\textsc{Softmax}_c(x_0,x_1,\dots,x_{C-1}) = e^{x_c}/{\Sigma_{k=0}^{C-1} e^{x_k}}$.
The output is the class with the highest probability, i.e., $\ensemble(x) = Argmax_{c\in\mathcal{Y}} \ensemble_c(x)$.
Thus, given a tree ensemble for multiclass classification $\ensemble: \mathcal{X} \to \{0, 1, \dots, C-1\}$, $c^{(1)}, c^{(2)} \in \{0, 1, \dots C-1\}$, we find $\first, \second \in \mathcal{X}$ such that $\ensemble(\first) = c^{(1)} \neq \ensemble(\second) = c^{(2)}$.
We also extend the parameterized version of Def.~\ref{def:orig-sensti}  by requiring that the difference between probability of most and second-most probable class is large: 
\begin{definition}
\label{def:multi-sensti}
Given tree ensemble $\ensemble: \mathcal{X} \to \mathcal{Y}$, $F\subseteq \mathcal{F}$, $g \geq 0$, and two classes $c^{(1)}, c^{(2)} \in \mathcal{Y}$,  $\ensemble$ is $(g, F, c^{(1)}, c^{(2)})-$sensitive if there exist $\first, \second$ such that $(\first)_{\mathcal{F}\setminus F} = (\second)_{\mathcal{F}\setminus F}$, $\forall{c\neq c^{(1)}},\, \ensemble_{c^{(1)}}^{prob}(x^{(1)}) \geq g \times \ensemble_{c}^{prob}(x^{(1)}), \text{ and} $
    $\forall{c\neq c^{(2)}},\, \ensemble_{c^{(2)}}^{prob}(x^{(2)}) \geq g \times \ensemble_{c}^{prob}(x^{(2)}) $.
\end{definition}

Now to extend our MILP encoding to the multiclass setting it turns out that we only need to modify Eq.~(\ref{gap}), Eq. (\ref{affected_cons}) and Eq.~(\ref{Obj}) (and its data-aware variant).
We observe that for the $(g, F, c^{(1)}, c^{(2)})-$sensitivity problem for a multiclass ensemble, only \eqref{gap}, \eqref{affected_cons} and \eqref{Obj} need to be modified. We now describe these changes.
Let $\mathcal{L}_c$ denote the indices of the leaf variables corresponding to the trees of class $c$. 

The change in \eqref{gap} follows from Definition \ref{def:multi-sensti}. To encode that $\forall{c\neq c^{(1)}},\, \ensemble_{c^{(1)}}^{prob}(x^{(1)}) \geq g \times \ensemble_{c}^{prob}(x^{(1)}) $, we instead move to the space of $\ensemble^{Raw}$, i.e. the values before applying $\textsc{Softmax}$. Given the definition of $\textsc{Softmax}$, 
$$ \ensemble_{c^{(1)}}^{prob}(x^{(1)}) \geq g \times \ensemble_{c}^{prob}(x^{(1)}) $$
$$\implies \dfrac{\textsc{Exp}(\ensemble_{c^{(1)}}^{raw}(x^{(1)}))}{\sum \textsc{Exp}(\ensemble_{c_i}^{raw}(x^{(1)}))} \geq g \times \dfrac{\textsc{Exp}(\ensemble_{c^{raw}(x^{(1)})})}{\sum \textsc{Exp}(\ensemble_{c_i}^{raw}(x^{(1)}))} $$
$$\implies \ensemble_{c^{(1)}}^{raw}(x^{(1)}) \geq \ensemble_{c}^{raw}(x^{(1)}) + \ln g$$

We call $\ln g$ as $\eta$, to get the new gap constraints
\begin{equation}\tag{Gap-multi}\label{eq:gap-multi}
\begin{split}
    &\bigwedge_{c \neq c^{(1)}}\sum_{l_n \in \mathcal{L}_{c^{(1)}}} l_n^{(1)} n.val > \sum_{l_n \in \mathcal{L}_{c}} l_n^{(1)} n.val + \eta \\
    &\bigwedge_{c \neq c^{(2)}}\sum_{l_n \in \mathcal{L}_{c^{(2)}}} l_n^{(2)} n.val > \sum_{l_n \in \mathcal{L}_{c}} l_n^{(2)} n.val 
    + \eta \\
\end{split}
\end{equation}

With this new constraint gap constraint, we can arrive at a constraint for the affected leaves, similar to \eqref{affected_cons}, by adding the two constraints in \eqref{eq:gap-multi} and using reasoning analogous to that of the binary classification setting. 

\begin{equation}\tag{Aff-multi}
\small
\label{eq:Aff-multi}
    \sum_{\substack{l_n \in \mathcal{L}_{c^{(1)}}\\ n\not\in \, \mathcal{U}}} \left(l_n^{(1)} n.val -l_n^{(2)} n.val \right) +  \sum_{\substack{l_n \in \mathcal{L}_{c^{(2)}}\\ n\not\in \, \mathcal{U}}} \left( l_n^{(2)} n.val -l_n^{(1)} n.val \right) > 2\times \eta
\end{equation}

An objective function can be formulated as before:
\begin{equation}\tag{Obj-multi}
\small
\label{eq:obj-multi}
    \textsc{Max} \sum_{l_n \in \mathcal{L}_{c^{(1)}}} \left(l_n^{(1)} n.val -l_n^{(2)} n.val\right) +  \sum_{l_n \in \mathcal{L}_{c^{(2)}}}\left( l_n^{(2)} n.val -l_n^{(1)} n.val\right)
\end{equation}

\begin{theorem}
    The set of feasible solutions of the MILP defined by $\eqref{predicate_consistency}\wedge \eqref{leaf_consistency_1}\wedge\eqref{root_const}\wedge\eqref{non_root_const}\wedge\eqref{eq:same}\wedge\eqref{eq:gap-multi}$ and that of the MILP defined by adding \eqref{eq:Aff-multi} are equal. 
\end{theorem}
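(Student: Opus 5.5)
The plan is to show that every feasible solution of the base MILP already satisfies \eqref{eq:Aff-multi}. Then adding \eqref{eq:Aff-multi} removes nothing, and the two feasible sets coincide; the reverse inclusion is trivial because we only add a constraint. The argument follows the two-step structure of the proof of Theorem~\ref{thm:optimisations-are-sound}: first derive \eqref{unaffected_cons} from the structural constraints, then combine it with \eqref{eq:gap-multi}.

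\emph{Step 1: recover \eqref{unaffected_cons}.} The first half of the proof of Theorem~\ref{thm:optimisations-are-sound} uses only \eqref{predicate_consistency}--\eqref{eq:same}. It never touches \eqref{gap}, and it does not depend on which class a tree belongs to. So it applies verbatim to every tree in every partition $\mathcal{T}_c$. Concretely, the leaf variables are forced to be $0/1$ by \eqref{root_const} and \eqref{leaf_consistency_1}. Suppose an unaffected leaf $n$ has $l_n^{(1)}\neq l_n^{(2)}$, and let $n'$ be the leaf reached by the other copy. Let $n''$ be the last common ancestor of $n$ and $n'$; its predicate lies outside $\mathcal{V}_F$, so \eqref{eq:same} makes it agree in both copies. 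But \eqref{non_root_const} (or \eqref{root_const}) forces it to take opposite values in the two copies, a contradiction. Hence $l_n^{(1)}=l_n^{(2)}$ for all $n\in\unaff$, in every class.

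\emph{Step 2: combine the gap constraints.} Here I use $c^{(1)}\neq c^{(2)}$, which is implicit in the multiclass setting, since we seek $\ensemble(\first)=c^{(1)}\neq\ensemble(\second)=c^{(2)}$. I instantiate the first conjunct family of \eqref{eq:gap-multi} at $c=c^{(2)}$ and the second at $c=c^{(1)}$:
\[
\sum_{l_n\in\mathcal{L}_{c^{(1)}}} l_n^{(1)} n.val > \sum_{l_n\in\mathcal{L}_{c^{(2)}}} l_n^{(1)} n.val + \eta,
\qquad
\sum_{l_n\in\mathcal{L}_{c^{(2)}}} l_n^{(2)} n.val > \sum_{l_n\in\mathcal{L}_{c^{(1)}}} l_n^{(2)} n.val + \eta .
\]
Adding the two inequalities and rearranging gives
\[
\sum_{l_n\in\mathcal{L}_{c^{(1)}}}\bigl(l_n^{(1)}-l_n^{(2)}\bigr)n.val+\sum_{l_n\in\mathcal{L}_{c^{(2)}}}\bigl(l_n^{(2)}-l_n^{(1)}\bigr)n.val>2\eta,
\]
and the strict inequality is preserved under addition. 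Finally, I split each sum into leaves in $\unaff$ and leaves not in $\unaff$. By Step 1 every term with $n\in\unaff$ vanishes, which leaves exactly \eqref{eq:Aff-multi}.

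I expect the only delicate points to be bookkeeping rather than substance. One is checking that the Step 1 argument really is class-agnostic; it is, because \eqref{eq:same} constrains predicate variables globally and the leaf constraints are per tree. The other is making explicit the assumption $c^{(1)}\neq c^{(2)}$, without which the chosen instantiations of \eqref{eq:gap-multi} do not exist.
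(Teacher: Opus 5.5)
Your proposal is correct and follows the paper's proof: instantiate \eqref{eq:gap-multi} at $c=c^{(2)}$ and at $c=c^{(1)}$, add the two inequalities, and drop the unaffected terms using the \eqref{unaffected_cons} argument from Theorem~\ref{thm:optimisations-are-sound}, which you rightly note does not depend on the class. Stating the assumption $c^{(1)}\neq c^{(2)}$ explicitly is a worthwhile addition that the paper leaves implicit: if $c^{(1)}=c^{(2)}$, then \eqref{eq:Aff-multi} reduces to $0>2\eta$, which fails whenever $\eta\geq 0$.
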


The only difficult part in the proof is to see how we obtain~\eqref{eq:Aff-multi}.

Let us choose $c = c^{(2)}$ in the first half and $c = c^{(1)}$ in the second half in~\eqref{eq:gap-multi}. We obtain
\begin{equation}
\begin{split}
    &\sum_{l_n \in \mathcal{L}_{c^{(1)}}} l_n^{(1)} n.val > \sum_{l_n \in \mathcal{L}_{c^{(2)}}} l_n^{(1)} n.val + \eta \\
    &\sum_{l_n \in \mathcal{L}_{c^{(2)}}} l_n^{(2)} n.val > \sum_{l_n \in \mathcal{L}_{c^{(1)}}} l_n^{(2)} n.val + \eta \\
\end{split}
\end{equation}

Add the two equations

\begin{equation}
\begin{split}
    &\sum_{l_n \in \mathcal{L}_{c^{(1)}}} (l_n^{(1)}-l_n^{(2)}) n.val + \sum_{l_n \in \mathcal{L}_{c^{(2)}}} (l_n^{(2)}-l_n^{(1)}) n.val > 2\eta \\
\end{split}
\end{equation}
Since for the unaffected leafs $l_n^{(1)}-l_n^{(2)}$ is zero. We derive the desired equation.

\begin{equation}
\small
\tag{Aff-multi}\label{eq:Aff-multi2}
    \sum_{\substack{l_n \in \mathcal{L}_{c^{(1)}}\\ n\not\in \, \mathcal{U}}} \left(l_n^{(1)} n.val -l_n^{(2)} n.val \right) +  \sum_{\substack{l_n \in \mathcal{L}_{c^{(2)}}\\ n\not\in \, \mathcal{U}}} \left( l_n^{(2)} n.val -l_n^{(1)} n.val \right) > 2\times \eta
\end{equation}
\section{Experiments}
\label{sec:experiments}

\begin{table*}[t]
  \centering
  \small
    \begin{tabular}{l@{ }c@{ }c@{ }c@{ }c@{\,}}
      \toprule
      \multicolumn{4}{c}{\textbf{Binary classifiers}} \\
      \toprule
      SNO & ModelName & \#Trees & Depth & \#Features \\
      \midrule
      1 & breast\_cancer\_robust      & 4   & 4 & 10  \\
      2 & breast\_cancer\_unrobust    & 4   & 5 & 10  \\
      3 & diabetes\_robust            & 20  & 4 & 8  \\
      4 & diabetes\_unrobust          & 20  & 5 & 8  \\
      5 & ijcnn\_robust               & 60  & 8 & 22 \\
      6 & ijcnn\_unrobust             & 60  & 8 & 22 \\
      7 & adult                       & \{200,300,500\} & \{4,5\} & 15 \\
      8 & churn                       & \{200,300,500\} & \{4,5\} & 21 \\
      9 & pimadiabetes                & \{200,300,500\} & \{4,5\} & 9  \\
      10 & german\_credit             & \{500,800\}     & \{4,5\} & 20 \\
      \bottomrule
    \end{tabular}
  \caption{Binary Benchmark details. The models 1–6 are taken from~\cite{DBLP:conf/nips/ChenZS0BH19}; 7–10 are trained on UCI datasets~\cite{uci_repo} using all combination of \#Trees~$\in~\{200,300,500\}$ and Depth~$\in~\{5,6\}$ (six configurations). Depth refers to the average depth in the ensemble across all trees of the model.}
  \label{tab:benchmarks_binary}
\end{table*}

\begin{table*}[t]
  \centering
  \small
  
    \begin{tabular}{l@{}c@{ }c@{ }c@{ }c@{ }}
    \toprule
    \multicolumn{4}{c}{\textbf{Multi classifiers}} \\
      \toprule
      SNO & ModelName & \#Trees & Depth & \#Classes \\
      \midrule
      1 & covtype\_robust & 100 & 6 & 10 \\
      2 & covtype\_unrobust & 100 & 6 & 10 \\
      3 & fashion\_robust & 100 & 6 & 10 \\
      4 & fashion\_unrobust  & 100 & 6 & 10 \\
      5 & ori\_mnist\_robust        & 100 & 6 & 10  \\
      6 & ori\_mnist\_unrobust        & 100 & 6 & 10  \\
      7 & Iris & 100 & 1 & 3 \\
      8 & Red-Wine & 100 & 4 & 5 \\
      \bottomrule
    \end{tabular}

  \caption{Multiclass Benchmark details. The models 1–6 are from~\cite{DBLP:conf/nips/ChenZS0BH19}; models 7–8 are trained on UCI datasets~\cite{uci_repo}. Depth refers to the average depth in the ensemble across all trees of the model.}
  \label{tab:benchmarks_multiclass}
\end{table*}

\begin{figure}
  \begin{minipage}[t]{0.4\textwidth}
    \centering
    \includegraphics[scale=0.33]{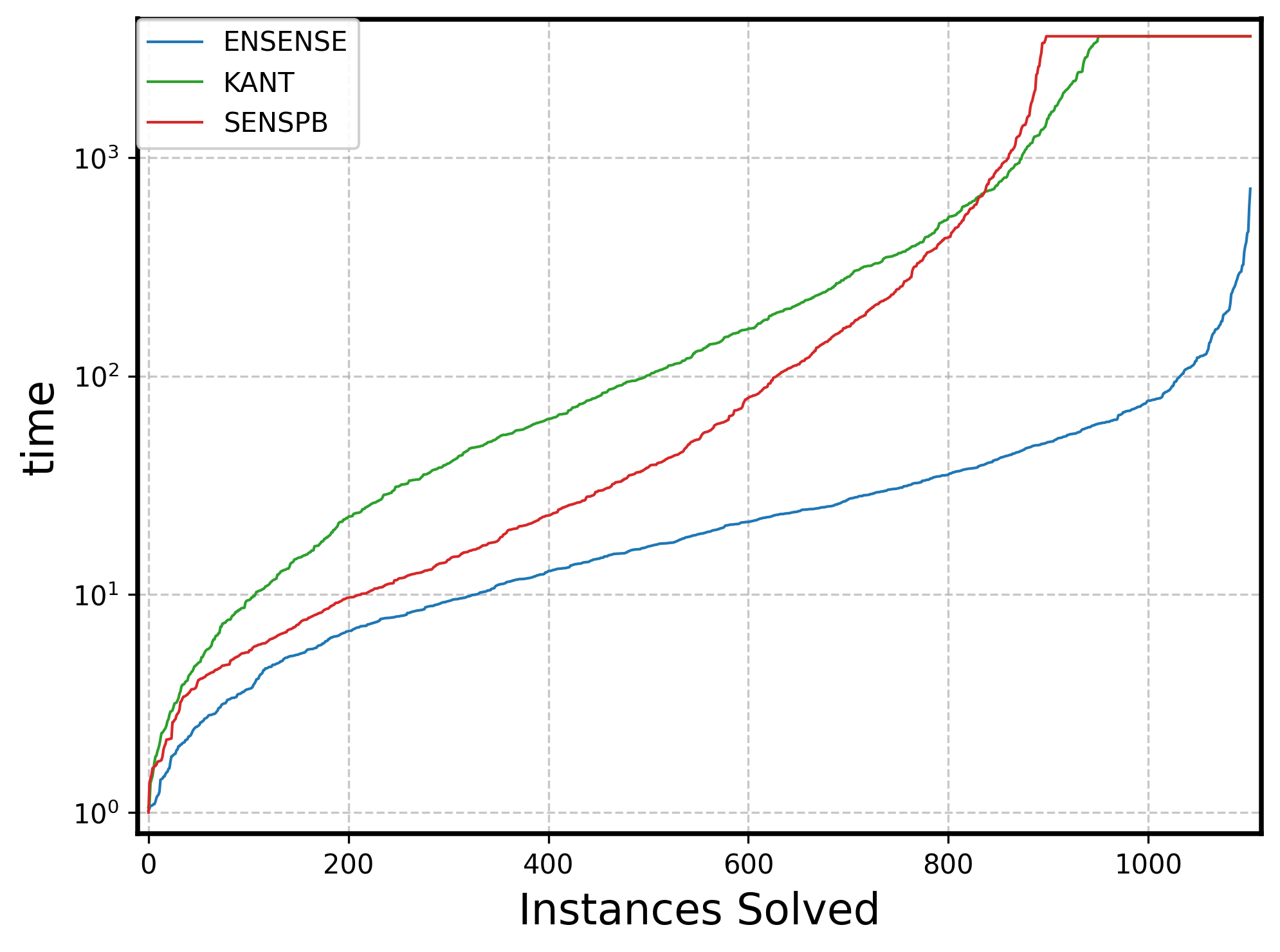}\\
    (a) Binary classification
  \end{minipage}
  \hspace{3em}
    \begin{minipage}[t]{0.4\textwidth}
    \centering
    \includegraphics[scale=0.33]{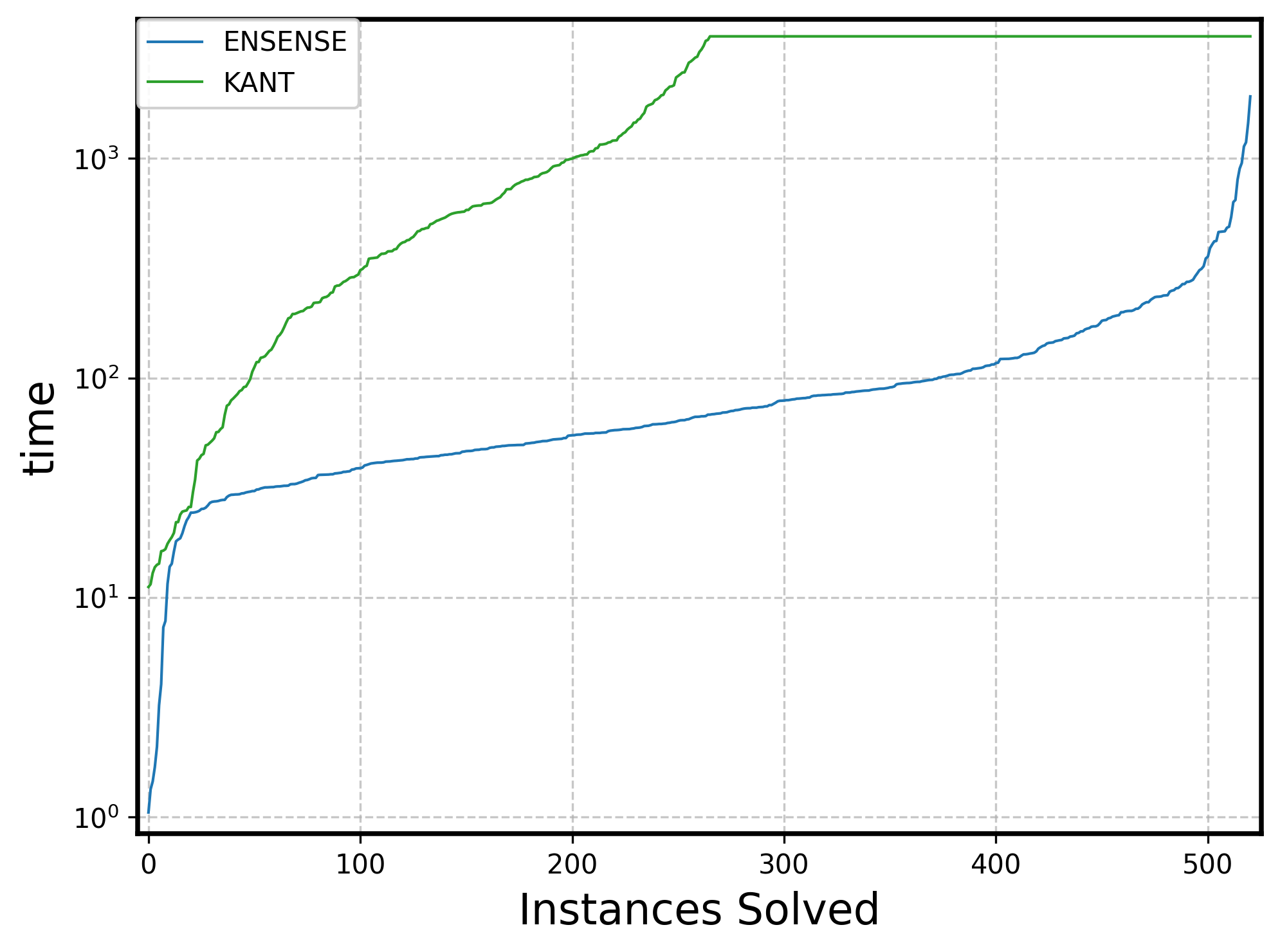}\\
    (b) Multiclass classification
  \end{minipage}
  \caption{
    Cactus plot comparing runtimes of single feature sensitivity for binary and multiclass.
  }
  \label{fig:experiments}
\end{figure}

\begin{table}[t]
\centering
\setlength{\tabcolsep}{4pt}
\small
\begin{tabular}{ccccc}
  \toprule
  & \textbf{None} & \textbf{Clause} & \textbf{Prob} & \textbf{Probclause} \\
  \midrule
  \textbf{mean} & 0.57 & 0.306 & 0.26 & 0.17 \\
  \bottomrule
\end{tabular}
\vspace{0.5em} 
\vspace{0.5em}
\small
\begin{tabular}{c@{ }c@{ }c@{ }c@{ }c}
  \toprule
  \textbf{Method} & \textbf{Win\%} & \textbf{Draw\%} & \textbf{Loss\%} & \\
  \hline
  Clause vs None & 80.5 & 3.2 & 16.1 & \\
  Prob vs None & 76.6 & 1.15 & 22.1 & \\
  Prob vs Clause & 56.6 & 1.1 & 42.1 & \\
  Probclause vs None & 86.7 & 1.1 & 12.1 & \\
  Probclause vs Prob & 56.4 & 15.3 & 28.1 & \\
  Probclause vs Clause & 72.7 & 1.7 & 25.5 & \\\bottomrule
\end{tabular}
\caption{In top table, we report the mean $\ell_2$ distance of all instances for each method. In the bottom table,
  we report the win, draw, and loss rates for all pairwise method comparisons. 
}
\label{tab:conformitytable}
\end{table}

\begin{figure}
    \centering
    \includegraphics[scale=0.5]{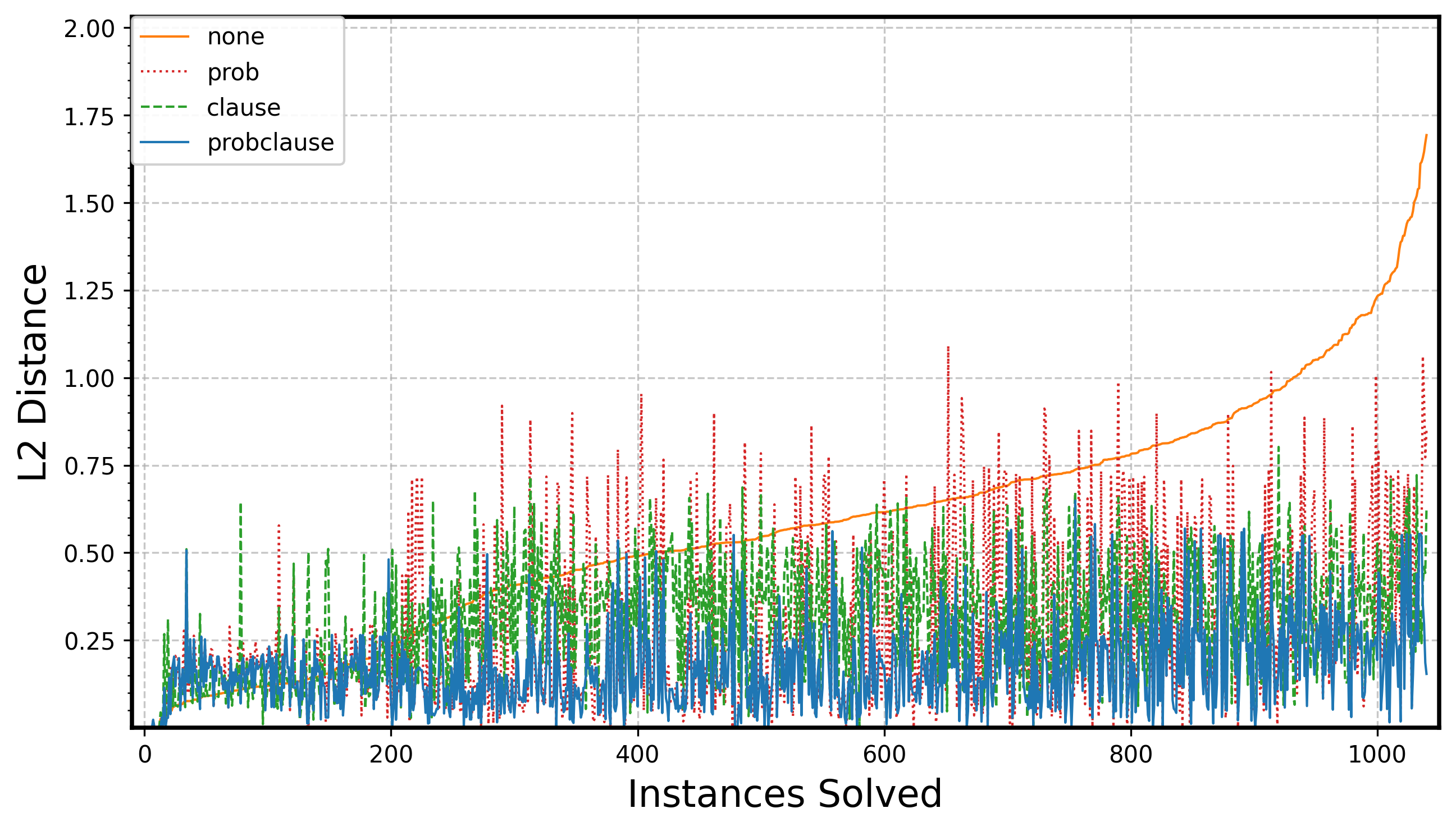}
    \caption{ This is cactus plot of the distance (from data set to counterexample found) across binary benchmarks. Instances are sorted by non data-aware baseline (none, orange, solid);
  for each position, distances of Prob (red, dotted), Clause (green, dashed), and Probclause (blue, solid) are evaluated on same instance (no re-sorting).
  Lower curves indicate better quality.}
\label{tab:conformityplot}
\end{figure}

We implement the MILP encoding from Section \ref{sec:encoding} with all structural optimizations and then add our two data-aware objectives in a tool called $\ourtool$. We trained models with XGBoost v1.7.1; we evaluate sensitivity using a single CPU core per run with a per-instance 3600 seconds timeout.
We use Gurobi \cite{gurobi} as the MILP solver. We address the following research questions:
\begin{itemize}
    \item {\bf RQ1.}~How does our MILP encoding with optimizations fare against the baseline and state-of-the-art for (i) binary and (ii) multi-class classification? 
    \item {\bf RQ2.}~How does data-aware sensitivity perform in giving better quality counterexamples? How do we measure it, what do we compare it against and how do each of our strategies help?
\end{itemize}

 {\bf Training Details}
 We trained XGBoost (v1.7.1; binary:logistic) with label encoding for categoricals, rows with missing values removed, and hyperparameters chosen on a 20\% validation split (seed = 42) over maximum depth$\in$ \{5,6\} and number of boosting rounds $\in$ \{200,300,500\} (benchmarks 7–9) or {500,800} (benchmark 10).

{\bf Binary Classification.}
To answer RQ1(i), we compare the performance of $\ourtool$ against $\senspb$ the tool using pseudo-Boolean encoding from~\cite{ahmad2025sensitivity}, and $\kantchelian$, the baseline MILP encoding (adapted from~\cite{kantchelian}). We use a wide set of benchmarks mentioned in Table~\ref{tab:benchmarks_binary}, with number of trees ranging from $4-800$, with depth from $4-8$. Considering each single-feature variant as a separate instance and with $gap\in \{0.5,1,1.5\}$ this yields a total of 1{,}290 benchmark instances. Figure~\ref{fig:experiments}(a) reports results for the 1{,}103 instances whose runtime is $\ge 1$\,s; we omit 187 instances solved in $<1$\,s for fair comparison, which demonstrates the superior performance of $\ourtool$. {\it $\ourtool$ achieves average speedups of approximately $8\times$ over $\kantchelian$ and $5\times$ over $\senspb$, with no timeouts whereas $\senspb$ times out for 205 and $\kantchelian$ for 153 instances.} 

{\bf Multiclass Classification.}
For RQ1(ii) we compare $\ourtool$ against the baseline MILP (still called $\kantchelian$), as $\senspb$ does not handle multiclass ensembles. We also repurposed the versatile robustness verification tool~\veritas\  from~\cite{veritas_multi} that can handle multiclass ensembles, to solve sensitivity. As it timed out on all except two instances, we detail these results in Appendix~\ref{appsec:veritas}. 
Table~\ref{tab:benchmarks_multiclass} lists the multiclass benchmarks. For fashion and ori\_mnist, which have 784 features, we restricted to single-feature sensitivity for the 100 most frequently used features in the tree model. For the others, we tested on all features, making a total of  538 benchmark instances. Our results are in Fig\ref{fig:experiments}(b), where we again dropped 17 instances which were solved in $< 1$ sec. {\em The results demonstrate that $\ourtool$ outperforms $\kantchelian$~by roughly 15x average speed up, and does not timeout on any of these benchmarks, whereas $\kantchelian$~times out on 256.}

{\bf Data-Aware Sensitivity Verification.}
For RQ2, to evaluate data-aware sensitivity, we compare (i) the baseline (no data-awareness) with (ii) the utility-based objective that steers solutions toward data-dense regions (called Prob), (iii) the clause-summary strategy that prunes empty-data regions (called Clause), (iv) the combination of both (called Probclause), which $\ourtool$ uses. 

To evaluate the performance of data-awareness sensitivity of each method,  we compute the $\ell_2$ distance over insensitive features from the data to the nearest counterexample region identified. 
We focus on regions rather than a single point because if the nearest data point lies in the same counterexample region, the correct distance is zero, whereas distance to any particular witness point inside that region can be nonzero and misleading.

Given an input $x$, a \emph{counterexample region} is a connected subset of the input space containing $x'$ such that:
(i) the tree ensemble’s prediction is \emph{constant} throughout the region (all points fall into the same combination of leaves across all trees), and
(ii) every point in the region has a label $y$ with the same probability (e.g., is misclassified). Intuitively, tree ensembles partition $\mathbb{R}^d$ into axis-aligned polytopes (one per joint leaf pattern); within each polytope, the model’s output does not change. 
A counterexample region is one of these polytopes (or an intersection thereof with additional constraints) that certifies a whole \emph{set} of violating inputs, not just a single adversarial point.

Experiments are conducted on the same binary classification benchmarks (from Table \ref{tab:benchmarks_binary}) with $gap \in \{0.5,1,1.5\}$. We used z3\citet{Z3} to synthesize clauses with a maximum of 3 literals, with counterexample-guided refinement, followed by greedy literal-pruning to minimize each clause without reducing coverage. We also discard clauses that enclose an input point, and limit to 1500 synthesized clauses. 

Our results in Table~\ref{tab:conformitytable} and Fig\ref{tab:conformityplot} show that Utility-based vs. baseline: wins 76.65\% of pairs (losses 22.19\%, draws 1.15\%), with mean distance advantage 0.435 on wins and mean deficit 0.11 on losses. 
Clause-summary vs. baseline: wins 80.59\% (losses 16.13\%, draws 3.26\%), with mean advantage 0.34 (wins) and mean deficit 0.09 (losses). Combined (Probclause): strongest overall—wins 86.04\% of pairs versus baseline, with mean advantage 0.47 on wins and mean deficit 0.06 on losses. {\em In summary, our results show significant improvement in quality of counterexample pairs (measured by their $\ell_2$ distance from data), with best results obtained by probclause used by $\ourtool$.}

{\bf Ablation Study.}
\label{appsec:ablation-study}
To evaluate the contribution of each component in our sensitivity analysis, we conducted an ablation study by systematically removing key optimizations \eqref{unaffected_cons} and \eqref{affected_cons} and evaluating the resulting performance. We present these results in Figure~\ref{fig:ablation}, for binary tree ensembles in (left) and multi-class tree ensembles (right). 
 Figure\ref{fig:ablation}(left) reports the results for  1102 benchmarks whose runtime is $>$1s and omitting 188 instances solved under 1s. Figure\ref{fig:ablation}(right) reports the results for 517 benchmarks whose runtime is $>$1s, omitting 21 instances solved under 1s 
Overall, the added constraints improve solver performance by up to an order of magnitude and dramatically reduce the number of timed-out instances. An interesting observation in both these plots is that when \eqref{affected_cons} is added then \eqref{unaffected_cons} do not contribute much in the performance (as can be seen by the overlapping lines). Overall, these results confirm that our enhancements significantly improve the practical feasibility of sensitivity verification in binary and multiclass decision tree ensembles.
\begin{figure}
  \centering

  \begin{minipage}[t]{0.48\textwidth}
    \centering
    \includegraphics[width=\linewidth]{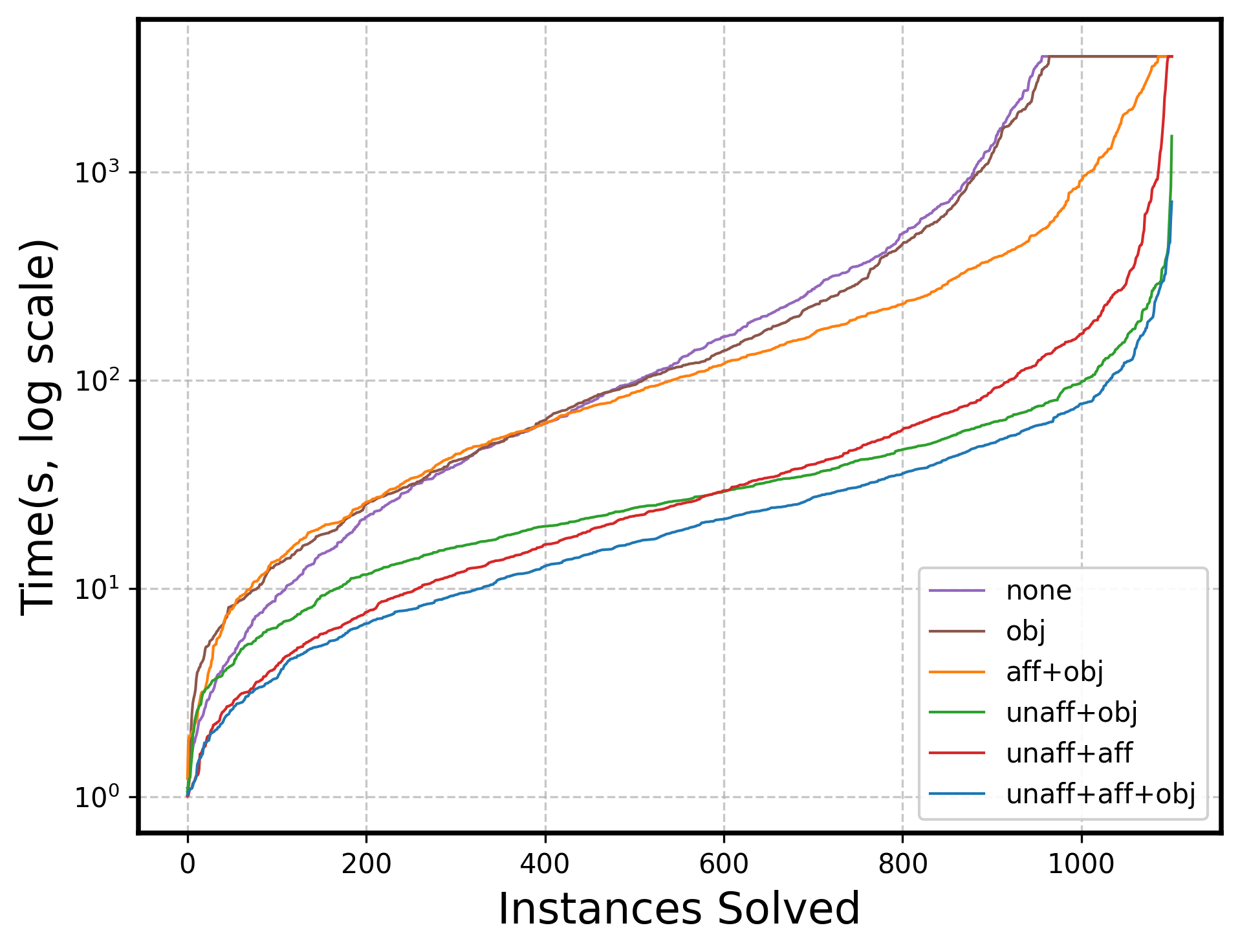}
    \caption*{(a) Ablation Binary}
  \end{minipage}
  \hfill
  \begin{minipage}[t]{0.48\textwidth}
    \centering
    \includegraphics[width=\linewidth]{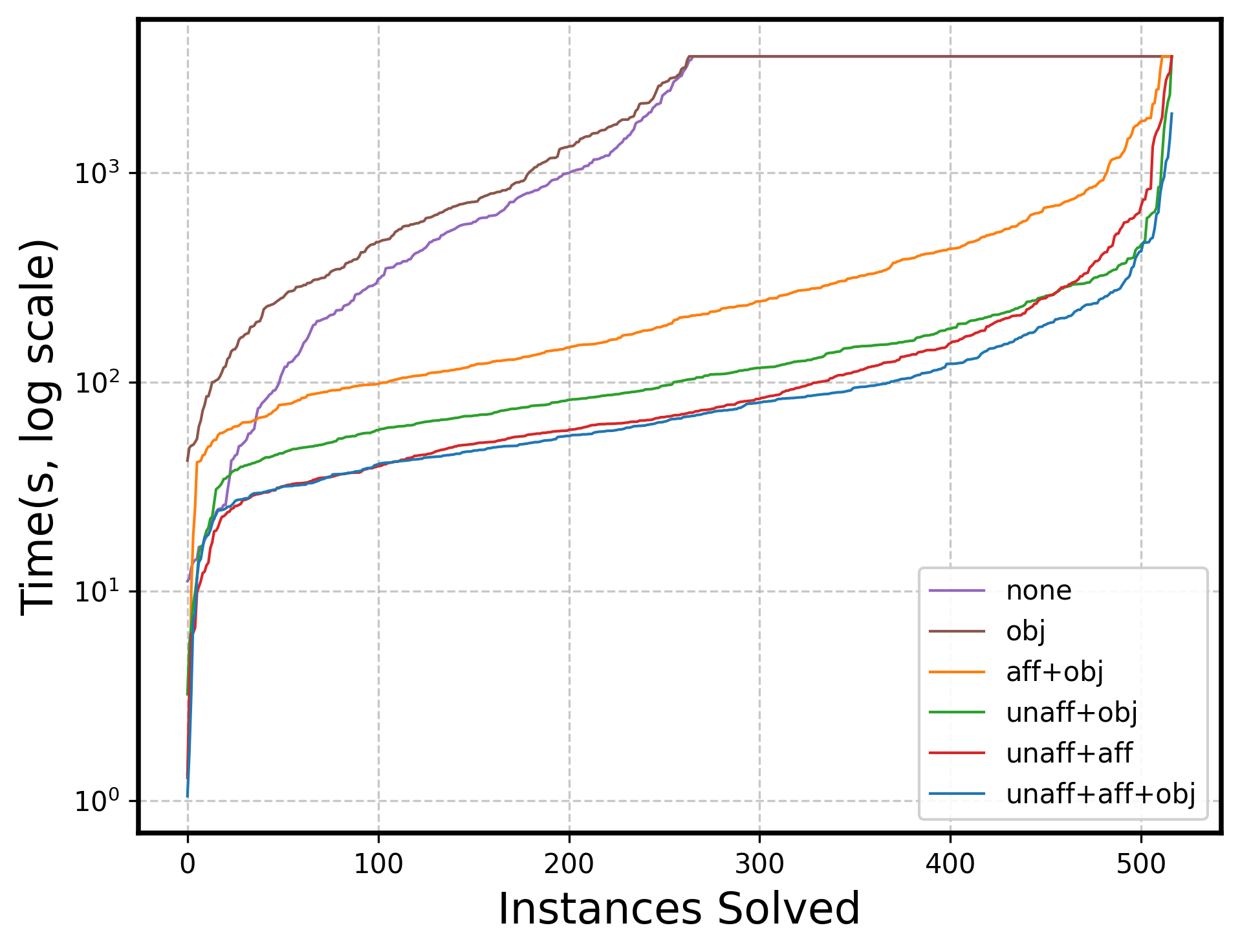}
    \caption*{(b) Ablation Multiclass}
  \end{minipage}

  \caption{
    Timing performance of single feature sensitivity for (a) binary ensembles , and (b) multiclass ensembles.
  }
  \label{fig:ablation}
\end{figure}

{\bf Multifeature Sensitivity Analysis}
To evaluate the $\ourtool$’s ability to handle multi-feature sensitivity (i.e., sensitivity wrt change in more than one feature simultaneously $|F|>1$), we conducted experiments on binary classification models, allowing 1, 2, 3, 4, and 5 features to vary simultaneously.
For each benchmark and each $m$-feature(s) setting, we generate as many test instances as the total number of features. Each instance corresponds to a randomly sampled subset of $m$-feature(s) from the feature set. Across all benchmarks (binary classifier) in Table\ref{tab:benchmarks_binary}, this results in a total of 430 instances for the $m$-feature(s). 
The results, shown in Figure~\ref{fig:newmultifeature}, demonstrate that even as the number of varying features increases, our tool remains scalable and even improves in performance. The reason is that the search space explored by the tool decreases as we increase number of sensitive features (since we search in the space of $\mathcal{F}\setminus F$).  These results demonstrate the framework’s scalability and effectiveness in performing multi-feature sensitivity analysis.

\label{appsec:newmulti-feature}
\begin{figure}
    \centering
    \includegraphics[scale=0.42]{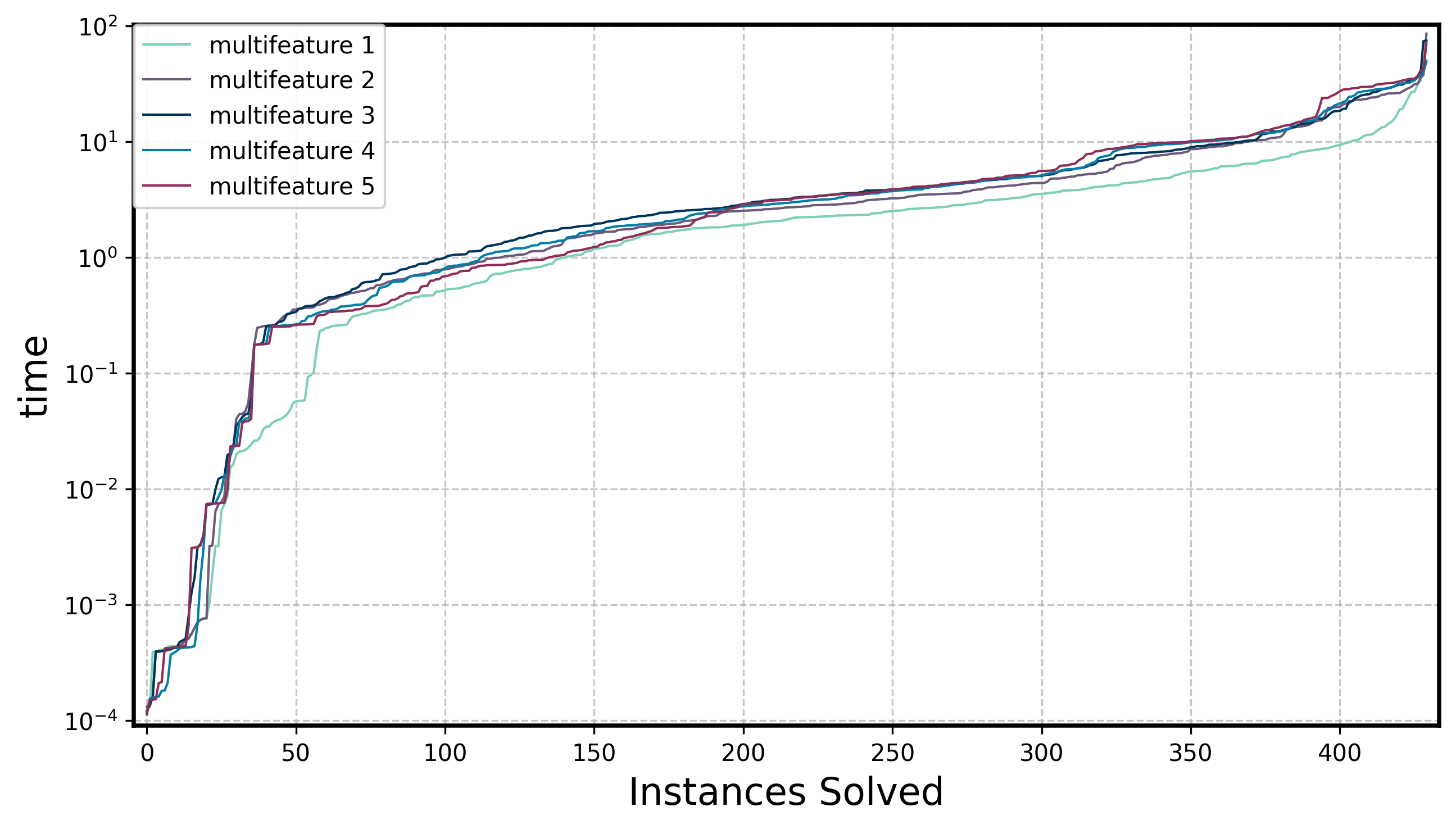}
    \caption{Timing performance of Multifeature sensitivity for Binary Classification}
    \label{fig:newmultifeature}
\end{figure}

\section{Conclusion}
\label{sec:conclusion}
In this work, we defined a data-aware variant of the sensitivity problem on tree ensembles and developed two approaches to solve this. 
We developed a new MILP encoding with several improvements, that allows us to improve the quality of the sensitivity witness reported while at the same time providing upto $5\times$ speedup for binary classification over the existing methods and $15\times$ for multiclass classification.
One obvious direction for future work is to develop methods for training tree ensembles such that sensitivity can be reduced. This is analogous to the development of various tools for training decision trees that are hardened for local robustness.
\section*{Acknowledgments}
We acknowledge the State Bank of India (SBI) Foundation Hub for Data Science \& Analytics, IIT Bombay for supporting the work done in this project. We thank Shri Shakeel Ahmed Agasimani, Deputy General Manager, Analytics Department, Shri C Ramesh Chander, Chief Manager (Statistician), Muthukumaran M S, Manager (Data Scientist), Komaragiri Srinivas Jagannath, Manager (Data Scientist), State Bank of India, for multiple wide-ranging discussions and feedback. We are grateful to Sunita Sarawagi for her valuable discussions and guidance throughout this work.


\bibliography{reference}
\bibliographystyle{apalike}

\newpage
\appendix

\section*{Appendix}

The appendix is organized into 4 sections, in the order in which they occurred in the paper:
\begin{itemize}
    \item In Section~\ref{appsec:notation}, we provide a table of notation used throughout the paper for easy reference.
    \item   In Section~\ref{appsec:tabular-examples}, we provide data-aware sensitivity examples from Tabular Data as promised in Section~\ref{sec:data-aware}. 
    \item In Section~\ref{appsec:veritas}, we describe the difficulty with using the \veritas~\cite{veritas_multi} framework for comparison and how we can encode our sensitivity problem in that framework and perform some comparisons. 
    \item In section~\ref{appsec:example}, we provide an illustrative example of our encoding for a small decision tree ensemble.
\end{itemize} 
\newpage
\section{Notation table}
\label{appsec:notation}
\begin{table}[h!]
\small
\begin{tabularx}{\textwidth}{l X}
  \toprule
  Symbol & Meaning \\      \midrule
  $\mathcal{X}$ & input space of the classifiers\\
  $\mathcal{Y}$ & output space of the classifiers\\
  $\mathcal{F}$ & set of all features\\
  $f$ & a feature in $\mathcal{F}$\\
  $x$ & an input in $\mathcal{X}$\\
  $x_f$ & value of feature $f$ for input $x$\\  \midrule
  $T$ & a decision tree\\
  $n$ & a node in a decision tree\\
  $n.val$ & leaf value of leaf node $n$\\
  $n.guard = X_f < \tau$ & guard condition of internal node $n$\\
  $n.yes$ & child node of internal node $n$ for true guard evaluation\\
  $n.no$ & child node of internal node $n$ for false guard evaluation\\
  $T(x)$ & output of tree $T$ on input $x$\\
  $X_f$ & variable for feature $f$\\
  $\mathcal{T}$ & set of decision trees in the ensemble\\
  $\mathcal{T}_c$ & set of decision trees in the ensemble for class $c$\\
  $\ensemble$ & a decision tree ensemble\\\midrule
  $\first, \second$ & inputs to the ensemble\\
  $F$ & set of features to check sensitivity against\\
  $g$ & minimum probability threshold for data-aware sensitivity\\
  $u(\first, \second)$ & utility function to maximize\\
  $\mathcal{D}$ & training data samples\\
  $\tau_{fk}$ & threshold in the $k$th guard of feature $f$\\  
  $K_f$ & the number of guards for feature $f$\\
  $\pi_f$ & marginal probability function for feature $f$\\
  $w$ & maximum size of the cavity constraints in~Eq.~\ref{eq:clause-template}.\\
  $lb_i$ & lower bound on feature $f_i$ in the cavity constraints in~Eq.~\ref{eq:clause-template}\\
  $ub_i$ & upper bound on feature $f_i$ in the cavity constraints in~Eq.~\ref{eq:clause-template}\\\midrule
  $p_{fk}$ & Boolean variable for the truth value of $k$th guard of feature $f$\\
  $l_{i}$ & Variable denoting leaf $i$ is visited.\\
  $\delta$ & $Sigmod^{-1}(0.5-g)$\\
  $\mathcal{V}_{F}$ & set of all predicate variables for features in $F$\\
  $\allleafs$ & set of all leaf nodes in the ensemble\\
  $\unaff$ & set of all unaffected leaves\\ \midrule
  $r_{fi}$ & Boolean variable to indicate that the feature in $i$th conjunct of cavity is feature $f$\\
  $s_{ik}$ & Boolean variable to indicate that the $i$th conjunct of cavity uses $k$th guard of its feature as lower bound\\
  $t_{ik}$ & Boolean variable to indicate that the $i$th conjunct of cavity uses $k$th guard of its feature as upper bound\\
  \bottomrule
\end{tabularx}
\end{table}

\section{Data-Aware Sensitivity Examples in Tabular Domains}
\label{appsec:tabular-examples}

\definecolor{lightgray}{gray}{0.95}

\lstdefinestyle{iclrCode}{
  backgroundcolor=\color{lightgray},
  basicstyle=\ttfamily\small,
  frame=none,
  breaklines=true,
  columns=fullflexible,
  keepspaces=true,
  showstringspaces=false,
  tabsize=2,
  numbers=none,
  escapeinside={(*@}{@*)}
}

This section presents examples that demonstrate the effectiveness of incorporating data awareness into sensitivity analysis, using models trained on tabular datasets. In each case, we compare sensitive input pairs discovered with and without data-aware methods, showing how the inclusion of data distribution knowledge leads to more realistic counterexamples.  Please note that the IJCNN model is trained by \cite{DBLP:conf/nips/ChenZS0BH19}, and the dataset for this model is unfortunately not available with the original feature names. The feature names are simply mentioned as f1 to f22.

\textit{Sensitive Example of IJCCN~\cite{DBLP:conf/nips/ChenZS0BH19}}: In the examples \ref{ex:example1} and example \ref{ex:example2} below, we analyse the sensitivity with respect to features 3 and 15 in the model trained on the IJCNN dataset. Both methods detect a sensitive pair by varying only these features respectively, resulting in change in the model's prediction. However, the distance from the training data distribution reveals a clear difference: The pair found without data-awareness has a distance of 1.077(example \ref{ex:example1}) and 1.06 (example \ref{ex:example2}), indicating it is quite far from any possible realistic data point and may not be very helpful in practice. In contrast, the pair identified with data-awareness has a distance of just 0.327(example \ref{ex:example1}) and 0.34 (example \ref{ex:example2}), meaning it is much closer to the data distribution and training data.The insensitive features in the training data points that are far away from the sensitive pair are highlighted with cyan color.

\newcounter{tempFigure}
\setcounter{tempFigure}{\value{figure}}
\setcounter{figure}{0}

\begin{figure}
\centering
\begin{tcolorbox}[
  colback=lightgray, colframe=black, boxrule=0.3pt, arc=1pt,
  left=2pt, right=2pt, top=2pt, bottom=2pt, boxsep=1pt,
  title={Example 1: IJCNN ROBUST \cite{DBLP:conf/nips/ChenZS0BH19}}
]
\textbf{Sensitive Point Found Without Data-Awareness Analysis For Sensitive Feature 3}
\begin{lstlisting}[style=iclrCode]
Point1: {'f1': 0.0, 'f2': 1.0, (*@\textcolor{red}{'f3': 0.0,}@*) 'f4': 0.0, 'f5': 1.0, 'f6': 1.0, 'f7': 0.0, 'f8': 0.0, 'f9': 1.0, 'f10': 0.0, 'f11': 0.872834, 'f12': 1.21062, 'f13': 0.637325, 'f14': 0.356398, 'f15': 0.482769, 'f16': 0.390789, 'f17': 0.609402, 'f18': 0.558829, 'f19': 0.607626, 'f20': 0.696077, 'f21': 0.448006, 'f22': 0.619263}
Point2: : {'f1': 0.0, 'f2': 1.0, (*@\textcolor{red}{'f3': 1.0,}@*) 'f4': 0.0, 'f5': 1.0, 'f6': 1.0, 'f7': 0.0, 'f8': 0.0, 'f9': 1.0, 'f10': 0.0, 'f11': 0.872834, 'f12': 1.210621, 'f13': 0.637325, 'f14': 0.356398, 'f15': 0.482769, 'f16': 0.390789, 'f17': 0.609402, 'f18': 0.558829, 'f19': 0.607626, 'f20': 0.696077, 'f21': 0.448006, 'f22': 0.619263}
Distance from data: 1.07757866169
Nearest Training Datapoint: {'f1': 0.0, 'f2': 1.0, 'f3': 0.0, 'f4': 0.0, (*@\textcolor{cyan}{'f5': 0.0, 'f6': 0.0,}@*) 'f7': 0.0, 'f8': 0.0, (*@\textcolor{cyan}{'f9': 0.0,}@*) 'f10': 0.0, (*@\textcolor{cyan}{'f11': 0.540556, 'f12': 0.250375, 'f13': 0.467001, 'f14': 0.470297, 'f15': 0.570899, 'f16': 0.527529, 'f17': 0.517118,}@*) 'f18': 0.51931, (*@\textcolor{cyan}{'f19': 0.51546, 'f20': 0.55852, 'f21': 0.53037, 'f22': 0.52894}@*)}
 \end{lstlisting}
\textbf{Sensitive Point Found With Data-Awareness Analysis For Sensitive Feature 3}
\begin{lstlisting}[style=iclrCode]
Point1: {'f1': 0.0, 'f2': 0.0,  (*@\textcolor{red}{'f3': 0.0,}@*) 'f4': 0.0, 'f5': 1.0, 'f6': 0.0, 'f7': 0.0, 'f8': 0.0, 'f9': 0.0, 'f10': 0.0, 'f11': 0.678628, 'f12': 0.541327, 'f13': 0.512386, 'f14': 0.516051, 'f15': 0.516459, 'f16': 0.497491, 'f17': 0.475932, 'f18': 0.495826, 'f19': 0.459603, 'f20': 0.502477, 'f21': 0.50531, 'f22': 0.507861}
Point2: {'f1': 0.0, 'f2': 0.0,  (*@\textcolor{red}{'f3': 1.0,}@*) 'f4': 0.0, 'f5': 1.0, 'f6': 0.0, 'f7': 0.0, 'f8': 0.0, 'f9': 0.0, 'f10': 0.0, 'f11': 0.678628, 'f12': 0.541327, 'f13': 0.512386, 'f14': 0.516051, 'f15': 0.516459, 'f16': 0.497491, 'f17': 0.475932, 'f18': 0.495826, 'f19': 0.459603, 'f20': 0.502477, 'f21': 0.50531, 'f22': 0.507861}
Distance from data: 0.327039
Nearest Training Datapoint: {'f1': 0.0, 'f2': 0.0, 'f3': 0.0, 'f4': 0.0, 'f5': 1.0, 'f6': 0.0, 'f7': 0.0, 'f8': 0.0, 'f9': 0.0, 'f10': 0.0, (*@\textcolor{cyan}{'f11': 0.536865, 'f12': 0.250375,}@*) 'f13': 0.51317, 'f14': 0.50743, 'f15': 0.526203, 'f16': 0.497706, 'f17': 0.484023, (*@\textcolor{cyan}{'f18': 0.524588, 'f19': 0.541842, 'f20': 0.467001, 'f21': 0.470298, 'f22': 0.570898}@*)}
 \end{lstlisting}
\end{tcolorbox}
\captionsetup{labelformat=empty, justification=centering}
  \caption{Example 1: IJCNN ROBUST \cite{DBLP:conf/nips/ChenZS0BH19}}
  \label{ex:example1}
\end{figure}

\begin{figure}
\centering
\begin{tcolorbox}[
  colback=lightgray, colframe=black, boxrule=0.3pt, arc=1pt,
  left=2pt, right=2pt, top=2pt, bottom=2pt, boxsep=1pt,
  title={Example 2: IJCNN ROBUST \cite{DBLP:conf/nips/ChenZS0BH19}}
]
\textbf{Sensitive Point Found Without Data-Awareness Analysis For Sensitive Feature 15}
\begin{lstlisting}[style=iclrCode]
Point1: {'f1': 0.0, 'f2': 1.0, 'f3': 1.0, 'f4': 0.0, 'f5': 1.0, 'f6': 1.0, 'f7': 0.0, 'f8': 0.0, 'f9': 0.0, 'f10': 1.0, 'f11': 0.822872, 'f12': 0.276149, 'f13': 0.448491, 'f14': 0.653076, (*@\textcolor{red}{'f15': 0.64666,}@*) 'f16': 0.615426, 'f17': 0.36238, 'f18': 0.561026, 'f19': 0.533231, 'f20': 1.199128, 'f21': 0.394238, 'f22': 0.558881}
Point2: {'f1': 0.0, 'f2': 1.0, 'f3': 1.0, 'f4': 0.0, 'f5': 1.0, 'f6': 1.0, 'f7': 0.0, 'f8': 0.0, 'f9': 0.0, 'f10': 1.0, 'f11': 0.822872, 'f12': 0.276149, 'f13': 0.448491, 'f14': 0.653076,  (*@\textcolor{red}{'f15': 0.30856,}@*) 'f16': 0.615426, 'f17': 0.36238, 'f18': 0.561026, 'f19': 0.533231, 'f20': 1.199128, 'f21': 0.394238, 'f22': 0.558881}
Distance from data 1.0647264749
Nearest Training Datapoint:: {'f1': 0.0, 'f2': 1.0, 'f3': 0.0, 'f4': 0.0, (*@\textcolor{cyan}{'f5': 0.0, 'f6': 0.0,}@*) 'f7': 0.0, 'f8': 0.0, 'f9': 0.0, (*@\textcolor{cyan}{'f10': 0.0, 'f11': 0.579145, 'f12': 0.18406,}@*) 'f13': 0.456363,  (*@\textcolor{cyan}{'f14': 0.543959, 'f15': 0.531502, 'f16': 0.449082, 'f17': 0.483391,}@*) 'f18': 0.520606,  (*@\textcolor{cyan}{'f19': 0.511848, 'f20': 0.527679, 'f21': 0.474918, 'f22': 0.477095}@*)}
 \end{lstlisting}
\textbf{Sensitive Point Found With Data-Awareness Analysis For Sensitive Feature 15}
\begin{lstlisting}[style=iclrCode]
Point1 : {'f1': 0.0, 'f2': 1.0, 'f3': 0.0, 'f4': 0.0, 'f5': 0.0, 'f6': 0.0, 'f7': 0.0, 'f8': 0.0, 'f9': 0.0, 'f10': 0.0, 'f11': 0.797221, 'f12': 0.341174, 'f13': 0.540222, 'f14': 0.380818,  (*@\textcolor{red}{'f15': 1.168982,}@*) 'f16': 0.457969, 'f17': 0.368609, 'f18': 0.457974, 'f19': 0.527069, 'f20': 0.399163, 'f21': 0.567169, 'f22': 0.501212}
Point2 : {'f1': 0.0, 'f2': 1.0, 'f3': 0.0, 'f4': 0.0, 'f5': 0.0, 'f6': 0.0, 'f7': 0.0, 'f8': 0.0, 'f9': 0.0, 'f10': 0.0, 'f11': 0.797221, 'f12': 0.341174, 'f13': 0.540222, 'f14': 0.380818,  (*@\textcolor{red}{'f15': 0.412158,}@*) 'f16': 0.457969, 'f17': 0.368609, 'f18': 0.457974, 'f19': 0.527069, 'f20': 0.399163, 'f21': 0.567169, 'f22': 0.501212}
 Distance from data  0.3449
 Nearest Training Datapoint:: {'f1': 0.0, 'f2': 1.0, 'f3': 0.0, 'f4': 0.0, 'f5': 0.0, 'f6': 0.0, 'f7': 0.0, 'f8': 0.0, 'f9': 0.0, 'f10': 0.0, (*@\textcolor{cyan}{'f11': 0.579295, 'f12': 0.16411, 'f13': 0.44351, 'f14': 0.45535, 'f15': 0.542591, 'f16': 0.501968, 'f17': 0.500353, 'f18': 0.5093, 'f19': 0.495474, 'f20': 0.515701,}@*) 'f21': 0.511422, 'f22': 0.517442}
 \end{lstlisting}
\end{tcolorbox}
\captionsetup{labelformat=empty, justification=centering}
\caption{Example 2: IJCNN ROBUST \cite{DBLP:conf/nips/ChenZS0BH19}}
\label{ex:example2}
\end{figure}

\textit{Sensitive Examples of Adult: ( based on Adult dataset \cite{uci_repo})}:
 Here, we present the analysis of the `adult' dataset in examples \ref{ex:example3} and \ref{ex:example4}, this time examining the sensitivity of the features `age' and `sex'. The non-data-aware (baseline) method identifies the sensitive pairs with larger distances of 0.656 and 0.9899 for sensitive features `age' and `sex', respectively. The data-aware method, however, identifies pairs that are much closer to the data distances of 0.019 and 0.108.
 In example \ref{ex:example4}, the baseline(non-data-aware) method reports an `age' value of 86, even though the closest datapoint has an age of `40'. The data-aware method, however, identifies a pair with an age value of `46', very close to the nearest datapoint, which has an age of `45'.
 A similar pattern appears for capital-gain and capital-loss. The baseline method(non-data-aware) selects a pair with values 10{,}585 (capital-gain) and 3{,}142 (capital-loss), while the nearest datapoint has values 0 and 2{,}258. The data-aware method, by comparison, identifies a pair where both features are 0, matching the nearest datapoint exactly. The insensitive features in the training data points that are far away from the sensitive pair are highlighted with cyan color. 
 \begin{figure}
 \centering
\begin{tcolorbox}[
  colback=lightgray, colframe=black, boxrule=0.3pt, arc=1pt,
  left=2pt, right=2pt, top=2pt, bottom=2pt, boxsep=1pt,
  title={Example 3: Adult  \cite{uci_repo}}
]
\textbf{Sensitive Point Found Without Data-Awareness Analysis For Sensitive Feature `age'}
\begin{lstlisting}[style=iclrCode]
Point1: {(*@\textcolor{red}{'age':66,}@*) 'workclass': 'Never-worked', 'fnlwgt': 574792.14018, 'education': 'Some-college', 'education-num': 16, 'marital-status': 'Widowed', 'occupation': 'Transport-moving', 'relationship': 'Unmarried', 'race': 'White', 'sex': 'Male', 'capital-gain': 10585, 'capital-loss': 2309, 'hours-per-week': 92, 'native-country': 'Poland'},
Point2: {(*@\textcolor{red}{'age':86,}@*) 'workclass': 'Never-worked', 'fnlwgt': 574792.14018, 'education': 'Some-college', 'education-num': 16, 'marital-status': 'Widowed', 'occupation': 'Transport-moving', 'relationship': 'Unmarried', 'race': 'White', 'sex': 'Male', 'capital-gain': 10585, 'capital-loss': 2309, 'hours-per-week': 92, 'native-country': 'Poland'}
Distance from data : 0.6569890
Nearest Training Datapoint:  {'age': 32, (*@\textcolor{cyan}{'workclass': 'Private', 'fnlwgt': 226975}@*), 'education': 'Some-college', (*@\textcolor{cyan}{'education-num': 10, 'marital-status': 'Never-married', 'occupation': 'Sales', 'relationship': 'Own-child'}@*), 'race': 'White', 'sex': 'Male', (*@\textcolor{cyan}{'capital-gain': 0, 'capital-loss': 1876, 'hours-per-week': 60, 'native-country': 'United-States'}@*)}
 \end{lstlisting}
\textbf{Sensitive Point Found With Data-Awareness Analysis For Sensitive Feature `age'}
\begin{lstlisting}[style=iclrCode]
Point1: {(*@\textcolor{red}{'age': 46,}@*) 'workclass': 'Self-emp-inc', 'fnlwgt': 180532.54372, 'education': 'Doctorate', 'education-num': 13.500002, 'marital-status': 'Married-civ-spouse', 'occupation': 'Exec-managerial', 'relationship': 'Husband', 'race': 'Black', 'sex': 'Female', 'capital-gain': 0, 'capital-loss': 0, 'hours-per-week': 40, 'native-country': 'Puerto-Rico'},
Point2: {(*@\textcolor{red}{'age': 33,}@*) 'workclass': 'Self-emp-inc', 'fnlwgt': 180532.54372, 'education': 'Doctorate', 'education-num': 13.500002, 'marital-status': 'Married-civ-spouse', 'occupation': 'Exec-managerial', 'relationship': 'Husband', 'race': 'Black', 'sex': 'Female', 'capital-gain': 0, 'capital-loss': 0, 'hours-per-week': 40, 'native-country': 'Puerto-Rico'}
Distance from data 0.0191765741
Nearest Training DataPoint: {'age': 44, (*@\textcolor{cyan}{'workclass': 'Private', 'fnlwgt': 211759, 'education': 'Bachelors'}@*), 'education-num': 13, 'marital-status': 'Married-civ-spouse', 'occupation': 'Exec-managerial', 'relationship': 'Husband', (*@\textcolor{cyan}{'race': 'Other', 'sex': 'Male'}@*), 'capital-gain': 0, 'capital-loss': 0, 'hours-per-week': 40, 'native-country': 'Puerto-Rico'}
 \end{lstlisting}
\end{tcolorbox}
\captionsetup{labelformat=empty, justification=centering}
\caption{Example 3: Adult \cite{uci_repo}}
\label{ex:example3}
\end{figure}

\begin{figure}
\centering
\begin{tcolorbox}[
  colback=lightgray, colframe=black, boxrule=0.3pt, arc=1pt,
  left=2pt, right=2pt, top=2pt, bottom=2pt, boxsep=1pt,
  title={Example 4: Adult  \cite{uci_repo}}
]
\textbf{Sensitive Point Found Without Data-Awareness Analysis For Sensitive Feature `sex'}
\begin{lstlisting}[style=iclrCode]
Point1: {'age': 86, 'workclass': 'Without-pay', 'fnlwgt': 520260.32927, 'education': 'HS-grad', 'education-num': 16.0, 'marital-status': 'Never-married', 'occupation': 'Transport-moving', 'relationship': 'Unmarried', 'race': 'Amer-Indian-Eskimo', (*@\textcolor{red}{'sex': 'Female',}@*) 'capital-gain': 10585, 'capital-loss': 3142, 'hours-per-week': 99, 'native-country': 'Laos'},
Point2: {'age': 86, 'workclass': 'Without-pay', 'fnlwgt': 520260.32927, 'education': 'HS-grad', 'education-num': 16.0, 'marital-status': 'Never-married', 'occupation': 'Transport-moving', 'relationship': 'Unmarried', 'race': 'Amer-Indian-Eskimo', (*@\textcolor{red}{'sex': 'Male',}@*) 'capital-gain': 10585, 'capital-loss': 3142, 'hours-per-week': 99, 'native-country': 'Laos'}
 Distance from data: 0.98998791099
Nearest Training Datapoint: { (*@\textcolor{cyan}{'age': 40, 'workclass': 'Private','fnlwgt': 287983, 'education': 'Bachelors', 'education-num': 13,}@*) 'marital-status': 'Never-married', (*@\textcolor{cyan}{'occupation': 'Tech-support', 'relationship': 'Not-in-family', 'race': 'Asian-Pac-Islander',}@*) 'sex': 'Female', (*@\textcolor{cyan}{'capital-gain': 0, 'capital-loss': 2258, 'hours-per-week': 48, 'native-country': 'Philippines',}@*)}
 \end{lstlisting}
 \vspace{-2mm}
\textbf{Sensitive Point Found With Data-Awareness Analysis For Sensitive Feature `sex'}
\begin{lstlisting}[style=iclrCode]
 Point1:{'age': 46, 'workclass': 'Self-emp-inc', 'fnlwgt': 284508.95444, 'education': 'Doctorate', 'education-num': 13, 'marital-status': 'Married-civ-spouse', 'occupation': 'Craft-repair', 'relationship': 'Husband', 'race': 'Black', (*@\textcolor{red}{'sex': 'Male',}@*) 'capital-gain': 0, 'capital-loss': 0, 'hours-per-week': 50, 'native-country': 'France'},
 Point2: {'age': 46, 'workclass': 'Self-emp-inc', 'fnlwgt': 284508.95444, 'education': 'Doctorate', 'education-num': 13, 'marital-status': 'Married-civ-spouse', 'occupation': 'Craft-repair', 'relationship': 'Husband', 'race': 'Black', (*@\textcolor{red}{'sex': 'Female',}@*) 'capital-gain': 0, 'capital-loss': 0, 'hours-per-week': 50, 'native-country': 'France'}
Distance from data: 0.1081739837784343
Nearest Training Datapoint: {'age': 45, (*@\textcolor{cyan}{'workclass': 'Private',}@*) (*@\textcolor{cyan}{'fnlwgt': 238567, 'education': 'Bachelors',}@*) 'education-num': 13, 'marital-status': 'Married-civ-spouse', (*@\textcolor{cyan}{'occupation': 'Exec-managerial'}@*), 'relationship': 'Husband', (*@\textcolor{cyan}{'race': 'White'}@*), 'sex': 'Male', 'capital-gain': 0, 'capital-loss': 0, (*@\textcolor{cyan}{'hours-per-week': 40,}@*) (*@\textcolor{cyan}{'native-country': 'England'}@*)}
 \end{lstlisting}
\end{tcolorbox}
\captionsetup{labelformat=empty, justification=centering}
\caption{Example 4: Adult \cite{uci_repo}}
\label{ex:example4}
\end{figure}

\textit{Sensitive Examples of Pimadiabetes: (\cite{uci_repo})}:
Finally, we present one more sensitive pair example, the Pima Diabetes dataset (with no categorical features), in example \ref{ex:example5}, and examine the sensitivity of feature (`BloodPressure'). Again, the non-data-aware method identifies the sensitive pairs with larger distances of 0.35 , where almost all feature values are far from the data(which we show in cyan color) . However, the data-aware method again identifies the pairs that are much closer to the data distances of 0.03,where only feature "Glucose" is far and the others are close.

The examples clearly show that data-aware search finds sensitive pairs closer to the data.
\begin{figure}
\centering
\begin{tcolorbox}[
  colback=lightgray, colframe=black, boxrule=0.3pt, arc=1pt,
  left=2pt, right=2pt, top=2pt, bottom=2pt, boxsep=1pt,
  title={Example 5: Pimadiabetes  \cite{uci_repo}}
]
\textbf{Sensitive Point Found Without Data-Awareness Analysis For Sensitive Feature 2}
\begin{lstlisting}[style=iclrCode]
Point1: {'Pregnancies': 17, 'Glucose': 188, (*@\textcolor{red}{ 'BloodPressure': 122,}@*) 'SkinThickness': 33, 'Insulin': 846, 'BMI': 67.1, 'DiabetesPedigreeFunction': 2.42, 'Age': 81},
Point2: {'Pregnancies': 17.0, 'Glucose': 188, (*@\textcolor{red}{'BloodPressure': 76,}@*) 'SkinThickness': 33, 'Insulin': 846, 'BMI': 67.1, 'DiabetesPedigreeFunction': 2.42, 'Age': 81}
Distance from data: 0.3534358888
Nearest Training Datapoint: {(*@\textcolor{cyan}{'Pregnancies': 10, 'Glucose': 148}@*), 'BloodPressure': 84, (*@\textcolor{cyan}{'SkinThickness': 48, 'Insulin': 237, 'BMI': 37.6, 'DiabetesPedigreeFunction': 1.001, 'Age': 51}@*)}
 \end{lstlisting}
\textbf{Sensitive Point Found With Data-Awareness Analysis For Sensitive Feature 2}
\begin{lstlisting}[style=iclrCode]
Point 1:{'Pregnancies': 2e-06, 'Glucose': 139, (*@\textcolor{red}{'BloodPressure':70,}@*) 'SkinThickness': 0, 'Insulin': 0, 'BMI': 32.75, 'DiabetesPedigreeFunction': 0.3595, 'Age': 21}
Point2: {'Pregnancies': 2e-06, 'Glucose': 139, (*@\textcolor{red}{'BloodPressure':79,}@*) 'SkinThickness': 0, 'Insulin': 0, 'BMI': 32.75, 'DiabetesPedigreeFunction': 0.3595, 'Age': 21}
Distance from data: 0.03051399
Nearest Training Datapoint: {'Pregnancies': 0, (*@\textcolor{cyan}{'Glucose': 132}@*), 'BloodPressure': 78, 'SkinThickness': 0, 'Insulin': 0, 'BMI': 32.4, 'DiabetesPedigreeFunction': 0.393, 'Age': 21} 
 \end{lstlisting}
\end{tcolorbox}
\captionsetup{labelformat=empty, justification=centering}
\caption{Example 5: Pimadiabetes \cite{uci_repo}}
\label{ex:example5}
\end{figure}
\setcounter{figure}{\value{tempFigure}}

\section{Comparison with \veritas}
\label{appsec:veritas}
One of the comments that we left in the main paper was the comparison or lack there-of with the tool \veritas, a versatile tool for robustness verification of decision tree ensembles, for which there is a multi-class variant available. In this section, we explain why we cannot easily compare with that tool and also how it can be modified so that we can compare it. Firstly,  \veritas\  does not solve the problem directly as it is not designed for sensitivity. So as the first step, we modified the tool enable the multiclass sensitivity analysis.
Note that \veritas~is a more generalizable tool and we approach the problem differently. To encode the multiclass feature sensitivity problem in \veritas, we create two instances of a given tree ensemble and optimize the following objective:
\begin{equation}\label{Veritas-Obj}
    {\textsc{Max}\left(D_0(x^{(1)}, x^{(2)}) - \textsc{Max}_{c, c \neq 0}(D_c(x^{(1)}, x^{(2)}))\right)}
\end{equation}

where $D$ is defined as follows:

$D_c(x^{(1)}, x^{(2)}) =
\begin{cases}
    \ensemble^{raw}_{c^{(1)}}(x^{(1)}) + \ensemble^{raw}_{c^{(2)}}(x^{(2)}), & \text{if } c = 0 \\
    \ensemble^{raw}_{0}(x^{(1)}) + \ensemble^{raw}_{0}(x^{(2)}), & \text{if } c = c^{(1)} \\
    \ensemble^{raw}_{c^{(2)}}(x^{(1)}) + \ensemble^{raw}_{c^{(1)}}(x^{(2)}), & \text{if } c = c^{(2)}\\
    \ensemble^{raw}_{c}(x^{(1)}) + \ensemble^{raw}_{c}(x^{(2)}), & otherwise
\end{cases}
$

We define the objective value found by \veritas~ being ”better than” \ourtool\ if the output of \veritas~ is greater than $2\times\eta$.

Here we provide the proof of correctness of this comparison.

From the equation \ref{eq:gap-multi}, we can conclude:
\begin{align}
    \ensemble_{c^{(1)}}^{raw}(x^{(1)}) - \textsc{Max}_{c, c \neq c^{(1)}} \ensemble_{c}^{raw}(x^{(1)}) &\geq \eta \\
    \ensemble_{c^{(2)}}^{raw}(x^{(2)}) - \textsc{Max}_{c, c \neq c^{(2)}} \ensemble_{c}^{raw}(x^{(2)}) &\geq \eta \\
\end{align}

Ideally we would like to maximize the sum of LHS of the above equations. We will prove that our objective is an upper bound for the output described above.

\begin{claim}
    For all $x^{(1)}$ and $x^{(2)}$, : $D_0(x^{(1)}, x^{(2)}) - \textsc{Max}_{c, c \neq 0}(D_c(x^{(1)}, x^{(2)})) \geq \ensemble_{c^{(1)}}^{raw}(x^{(1)}) - \textsc{Max}_{c, c \neq c^{(1)}} \ensemble_{c}^{raw}(x^{(1)}) + \ensemble_{c^{(2)}}^{raw}(x^{(2)}) - \textsc{Max}_{c, c \neq c^{(2)}} \ensemble_{c}^{raw}(x^{(2)})$.

As $\textsc{Max}(a,b) \leq \textsc{Max}(a) + \textsc{Max}(b)$, $\textsc{Max}_{c, c \neq 0}(D_c(x^{(1)}, x^{(2)})) \leq \textsc{Max}_{c, c \neq c^{(1)}} \ensemble_{c}^{raw}(x^{(1)}) + \textsc{Max}_{c, c \neq c^{(2)}} \ensemble_{c}^{raw}(x^{(2)})$. Thus negating and adding $D_0(x^{(1)}, x^{(2)})$ to both sides, we arrive at our claim. Hence our claim is true.
\end{claim}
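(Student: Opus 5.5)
The plan is to bound each competing term $D_c$ separately and then subtract. First, I will note that by the definition of $D$ we have $D_0(x^{(1)},x^{(2)}) = \ensemble^{raw}_{c^{(1)}}(x^{(1)}) + \ensemble^{raw}_{c^{(2)}}(x^{(2)})$. This term is exactly the sum of the two ``target'' raw scores appearing on the right-hand side of the claim.

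So the claim reduces to
\[
\textsc{Max}_{c \neq 0} D_c(x^{(1)},x^{(2)}) \;\leq\; \textsc{Max}_{c\neq c^{(1)}}\ensemble^{raw}_c(x^{(1)}) + \textsc{Max}_{c\neq c^{(2)}}\ensemble^{raw}_c(x^{(2)}).
\]
A maximum is at most a given upper bound exactly when every term is. It therefore suffices to show that each $D_c$ with $c\neq 0$ is at most the right-hand side.

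Each $D_c$ is a sum $\ensemble^{raw}_a(x^{(1)}) + \ensemble^{raw}_b(x^{(2)})$ for suitable indices $a,b$. It is enough to check that $a \neq c^{(1)}$ and $b \neq c^{(2)}$, because then each summand is bounded by the corresponding maximum. I will go through the cases of the definition of $D$.
\begin{itemize}
\item For $c=c^{(1)}$ we have $a=b=0$. This is fine provided $0\notin\{c^{(1)},c^{(2)}\}$.
\item For $c=c^{(2)}$ we have $a=c^{(2)}\neq c^{(1)}$ and $b=c^{(1)}\neq c^{(2)}$, using only $c^{(1)}\neq c^{(2)}$.
\item For every other $c$ we have $a=b=c\notin\{c^{(1)},c^{(2)}\}$.
\end{itemize}
Combining the cases gives the displayed inequality. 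Negating it and adding $D_0$ to both sides then yields the claim.

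The one delicate point is the role of index $0$. The definition of $D$ implicitly treats class $0$ as a slot distinct from $c^{(1)}$ and $c^{(2)}$, since $D_0$ stands for the combined target score. I would make this explicit by assuming, after relabeling classes if necessary, that $c^{(1)}, c^{(2)}, 0$ are pairwise distinct. This is the intended reading of the \veritas\ encoding, and it is precisely what the $c=c^{(1)}$ case needs. If one of them coincided with $0$, the case split in $D$ would overlap and would have to be reinterpreted, so I would state this assumption up front. The rest is the elementary observation that a sum of two maxima dominates any admissible pairwise sum.
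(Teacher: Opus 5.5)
Your proposal is correct and is the same argument as the paper's, which simply invokes $\max_c(a_c+b_c)\le \max_c a_c+\max_c b_c$. Your case check is exactly the bookkeeping the paper leaves implicit: every $D_c$ with $c\neq 0$ pairs an $x^{(1)}$-index different from $c^{(1)}$ with an $x^{(2)}$-index different from $c^{(2)}$. Your distinctness assumption on $0,c^{(1)},c^{(2)}$ is also genuinely needed rather than cosmetic: if $c^{(2)}=0$, then $D_{c^{(1)}}$ contains $\ensemble^{raw}_{c^{(2)}}(x^{(2)})$, and the inequality can fail.
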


As our claim is true for all $x^{(1)}$ and $x^{(2)}$ and $\forall_x f(x) \geq g(x) \implies \textsc{Max}_x(f(x)) \geq \textsc{Max}_x(g(x))$, our objective is an upper bound over the ideal objective.

Thus, we can safely say if \veritas~ outputs a value less than $2*\eta$ or it timeouts, while our tool gives a sat output, our tool is better than \veritas~. If our tool gives sat but \veritas~ provides a higher output, we deem \veritas~ to be better. If our tool gives unsat, then we ignore that instance.
 
We give \veritas~1200 seconds to run for each experiment and compare with the best output found till then.
For all other tools, we compare time taken for them to find a satisfying pair of examples. The results of the experiments are given in Table~\ref{tab:multiclass_running_times}.
The Veritas algorithm finds progressively larger and larger gaps. 
\%V indicates the amount of “gap” found by Veritas during the given time as compared to \ourtool. For instance, consider the Iris row where we report 2\%, which implies that the gap found by \ourtool is 50 times bigger than the gap found by Veritas.

\begin{table}[htbp]
\centering
\begin{tabular}{rrrrrrrrr}
\hline
\textbf{Dataset}& \textbf{\#Class}  & \textbf{Dep.} & \textbf{\#Trees}& \textbf{\ourtool} & \textbf{\kantchelian} & \textbf{\%V} \\
\hline
covtype\_robust & 10 & 6 & 100 & 139.08  & 3086.10 & 0 \\
covtype\_unrobust & 10 & 6 & 100 & 213.78  & 3087.16 & 0 \\
fashion\_robust & 10 & 6 & 100 & 118.76  & 5667.60 & 0 \\
fashion\_unrobust & 10 & 6 & 100 & 67.63  & 5001.60 & 0 \\
ori\_mnist\_robust & 10 & 6 & 100 & 108.76 & 3343.97  & 0 \\
ori\_mnist\_unrobust & 10 & 6 & 100 & 76 & 3587.97  & 0 \\
Iris & 3 & 1 & 100 & 0.01 & 0.01 & 2 \\
Red-Wine & 3 & 6 & 100 & 3.83 & 3.89 & 100 \\
\hline
\end{tabular}
\caption{ Multi-Class comparison experiments with \veritas\  and \kantchelian. The table reports PAR2 runtimes for the experiments in Fig.~\ref{fig:experiments}, counting any timeout as 2 $\times$ the timeout. }
\label{tab:multiclass_running_times}
\end{table}

 \section{ An illustrative example}

\label{appsec:example}

\begin{figure}[t]
\begin{center}

\begin{tikzpicture}[
  level 1/.style={sibling distance=20mm}, 
  edge from parent/.style={draw, -latex},
  every node/.style={align=center}
]


\node [draw, rounded corners] (A) at (5, 0){$f_7 < c_1$};
\node[draw, rounded corners] (B) at (6.5, -1.5){$f_{9} < c_3$};
\node[draw, rounded corners] (C) at (3.5, -1.5){$f_4 < c_2$};
\node[draw, rounded corners] (L1) at (7.25, -3){$val_3$};
\node[draw, rounded corners] (L2) at (5.75, -3){$val_2$};
\node[draw, rounded corners] (L3) at (4.25, -3){$val_1$};
\node[draw, rounded corners] (L4) at (2.75, -3){$val_0$};

\draw [thick, ->] (A) to node[midway, above right, scale=0.7] {$T$} (B);
\draw [thick, ->] (A) to node[midway, above left, scale=0.7] {$F$} (C);
\draw [thick, ->] (B) to node[midway, above right, scale=0.7] {$T$} (L1);
\draw [thick, ->] (B) to node[midway, above left, scale=0.7] {$F$} (L2);
\draw [thick, ->] (C) to node[midway, above right, scale=0.7] {$T$} (L3);
\draw [thick, ->] (C) to node[midway, above left, scale=0.7] {$F$} (L4);


\node [draw, rounded corners] (A) at (11, 0){$f_7 < c_1$};
\node[draw, rounded corners] (B) at (12.5, -1.5){$f_{9} < c_3$};
\node[draw, rounded corners] (C) at (9.5, -1.5){$f_{9} < c_4$};
\node[draw, rounded corners] (L1) at (13.25, -3){$val_7$};
\node[draw, rounded corners] (L2) at (11.75, -3){$val_6$};
\node[draw, rounded corners] (L3) at (10.25, -3){$val_5$};
\node[draw, rounded corners] (L4) at (8.75, -3){$val_4$};

\draw [thick, ->] (A) to node[midway, above right, scale=0.7] {$T$} (B);
\draw [thick, ->] (A) to node[midway, above left, scale=0.7] {$F$} (C);
\draw [thick, ->] (B) to node[midway, above right, scale=0.7] {$T$} (L1);
\draw [thick, ->] (B) to node[midway, above left, scale=0.7] {$F$} (L2);
\draw [thick, ->] (C) to node[midway, above right, scale=0.7] {$T$} (L3);
\draw [thick, ->] (C) to node[midway, above left, scale=0.7] {$F$} (L4);

\end{tikzpicture}
\end{center}
\caption{A tree ensemble with two trees $T_1, T_2$ having real valued (raw) outputs on leaves}
\label{fig:example}
\end{figure}
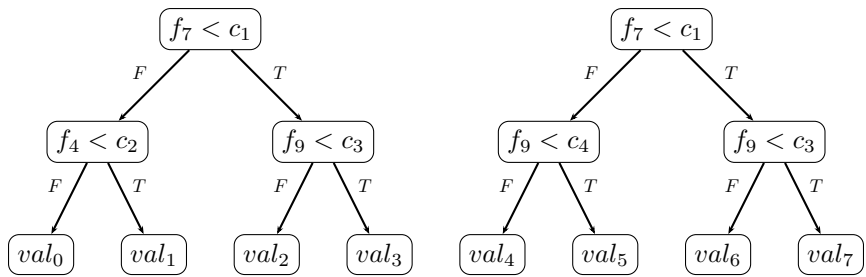

In Figure~\ref{fig:example}, we show a tree ensemble with two trees $T_1$ and $T_2$. Each tree has four leaves with real-valued outputs $val_i, i=0,\ldots,7$. Let us assume that the sensitive feature set is $\{f_4\}$. We want to verify if there exist two inputs $x^{(1)}$ and $x^{(2)}$ differing only in feature $f_4$ such that the output of the ensemble changes from $0.5-gap$ to $0.5+gap$.  

To formulate this as an MILP, we introduce the following variables:
\begin{itemize}
    \item $p^{(1)}_{41}, p^{(1)}_{71}, p^{(1)}_{91},p^{(1)}_{92}$: binary variables to represent the decisions at the internal nodes of the trees for input $x^{(1)}$.
    \item $p^{(2)}_{41}, p^{(2)}_{71}, p^{(2)}_{91}, p^{(2)}_{92}$: binary variables to represent the decisions at the internal nodes of the trees for input $x^{(2)}$.
    \item $l^{(1)}_0, l^{(1)}_1, l^{(1)}_2, l^{(1)}_3$: binary variables indicating which leaf of tree $T_1$ is reached by input $x^{(1)}$.
    \item $l^{(1)}_4, l^{(1)}_5, l^{(1)}_6, l^{(1)}_7$: binary variables indicating which leaf of tree $T_2$ is reached by input $x^{(1)}$.
    \item $l^{(2)}_0, l^{(2)}_1, l^{(2)}_2, l^{(2)}_3$: binary variables indicating which leaf of tree $T_1$ is reached by input $x^{(2)}$.
    \item $l^{(2)}_4, l^{(2)}_5, l^{(2)}_6, l^{(2)}_7$: binary variables indicating which leaf of tree $T_2$ is reached by input $x^{(2)}$.
\end{itemize}

The objective function in the following is equation~\ref{Obj} for our example.

\begin{align}
\max\; & 
  val_0\, l^{(1)}_0 
+ val_1\, l^{(1)}_1 
+ val_2\, l^{(1)}_2
+ val_3\, l^{(1)}_3 
+ val_4\, l^{(1)}_4 
+ val_5\, l^{(1)}_5 \notag\\
&\quad
+ val_6 \, l^{(1)}_6 
+ val_7 \, l^{(1)}_7 
- val_0 \, l^{(2)}_0
- val_1 \, l^{(2)}_1 
- val_2 \, l^{(2)}_2 \notag\\
&\quad
- val_3\, l^{(2)}_3
- val_4\, l^{(2)}_4
- val_5\, l^{(2)}_5
- val_6\, l^{(2)}_6
- val_7\, l^{(2)}_7
\end{align}

The above objective is subject to the following constraints.
In the guards of $f_9$, we assume $c_4 < c_3$.
Therefore, the following constraints are due to Equations~\ref{predicate_consistency}.
\begin{align}
 p^{(1)}_{91} &\le p^{(1)}_{92} \notag \\
 p^{(2)}_{91} &\le p^{(2)}_{92} 
\end{align}
The following constraints are due to Equations~\ref{leaf_consistency_1}.
\begin{align}
l^{(1)}_0 + l^{(1)}_1 + l^{(1)}_2 + l^{(1)}_3 &= 1\notag\\
l^{(2)}_0 + l^{(2)}_1 + l^{(2)}_2 + l^{(2)}_3 &= 1\\
l^{(1)}_4 + l^{(1)}_5 + l^{(1)}_6 + l^{(1)}_7 &= 1\notag\\
l^{(2)}_4 + l^{(2)}_5 + l^{(2)}_6 + l^{(2)}_7 &= 1\notag
\end{align}
The following constraints are due to Equations~\ref{root_const} and ~\ref{non_root_const}.\\
\begin{minipage}{0.48\linewidth}
\begin{align}
-p^{(1)}_{71} + l^{(1)}_0 + l^{(1)}_1 &= 0\notag\\
p^{(1)}_{71} + l^{(1)}_2 + l^{(1)}_3 &= 1\notag\\
-p^{(1)}_{71} + l^{(1)}_4 + l^{(1)}_5 &= 0\notag\\
p^{(1)}_{71} + l^{(1)}_6 + l^{(1)}_7 &= 1\notag\\
-p^{(1)}_{41} + l^{(1)}_0 &\le 0\notag\\
p^{(1)}_{41} + l^{(1)}_1 &\le 1\notag\\
-p^{(1)}_{92} + l^{(1)}_2 &\le 0\notag\\
p^{(1)}_{92} + l^{(1)}_3 &\le 1\notag\\
-p^{(1)}_{92} + l^{(1)}_6 &\le 0\notag\\
p^{(1)}_{92} + l^{(1)}_7 &\le 1\notag\\
-p^{(1)}_{91} + l^{(1)}_4 &\le 0\notag\\
p^{(1)}_{91} + l^{(1)}_5 &\le 1\notag
\end{align}    
\end{minipage}
\begin{minipage}{0.48\linewidth}
\begin{align}
-p^{(2)}_{71} + l^{(2)}_0 + l^{(2)}_1 &= 0\notag\\
p^{(2)}_{71} + l^{(2)}_2 + l^{(2)}_3 &= 1\notag\\
-p^{(2)}_{71} + l^{(2)}_4 + l^{(2)}_5 &= 0\notag\\
p^{(2)}_{71} + l^{(2)}_6 + l^{(2)}_7 &= 1\notag\\
-p^{(2)}_{41} + l^{(2)}_0 &\le 0\notag\\
p^{(2)}_{41} + l^{(2)}_1 &\le 1\\
-p^{(2)}_{92} + l^{(2)}_2 &\le 0\notag\\
p^{(2)}_{92} + l^{(2)}_3 &\le 1\notag\\
-p^{(2)}_{92} + l^{(2)}_6 &\le 0\notag\\
p^{(2)}_{92} + l^{(2)}_7 &\le 1 \notag\\
-p^{(2)}_{91} + l^{(2)}_4 &\le 0\notag\\
p^{(2)}_{91} + l^{(2)}_5 &\le 1 \notag
\end{align}    
\end{minipage}

The following constraints are due to Equations~\ref{eq:same}.
\begin{align}
-p^{(1)}_{91} + p^{(2)}_{91} &= 0\notag\\
-p^{(1)}_{92} + p^{(2)}_{92} &= 0\\
-p^{(1)}_{71} + p^{(2)}_{71} &= 0\notag
\end{align}
The following constraints are due to equation~\eqref{gap}: 
\begin{equation}
\label{eq:long-sum}
\begin{aligned}
val_0 \, l^{(1)}_0 
+ val_1\, l^{(1)}_1 
  + val_2\, l^{(1)}_2
  + val_3\, l^{(1)}_3
  + val_4\, l^{(1)}_4
  + val_5\, l^{(1)}_5
 + val_6 \, l^{(1)}_6
  + val_7\, l^{(1)}_7 &\ge gap - 0.5 \\
val_0\, l^{(2)}_0 + val_1\, l^{(2)}_1 
  + val_2\, l^{(2)}_2
  + val_3\, l^{(2)}_3
  + val_4\, l^{(2)}_4
  + val_5\, l^{(2)}_5
 + val_6\, l^{(2)}_6
  + val_6\, l^{(2)}_7 &\le - 0.5 - gap 
\end{aligned}
\end{equation}


The following constraints are due to equations~\eqref{unaffected_cons} and \eqref{affected_cons} respectively, since only feature $f_4$ is sensitive.
\begin{align}
l^{(1)}_2 = l^{(2)}_2 &\notag\\
l^{(1)}_3 = l^{(2)}_3 &\notag\\
l^{(1)}_4 = l^{(2)}_4 &\\
l^{(1)}_5 = l^{(2)}_5 &\notag\\
l^{(1)}_6 = l^{(2)}_6 &\notag\\
l^{(1)}_7 = l^{(2)}_7 &\notag
\end{align}

\begin{align}
    val_0\, l^{(1)}_0 + val_1\, l^{(1)}_1 
  - val_0\, l^{(2)}_0 - val_1\, l^{(2)}_1 &\ge 2*gap \label{eq:short-sum}
\end{align}
The above constraints indicates the key reason of the efficiency of our method.
The long sum of Equation~\ref{eq:long-sum} reduces to much shorter sum~\label{eq:short-sum}.
Thereby, MILP solver has easier time solving the problem.

\end{document}

%% file: math_commands.tex
\usepackage{amsmath,amsfonts,bm}

\def\eqref#1{equation~\ref{#1}}

\def\1{\bm{1}}

\DeclareMathAlphabet{\mathsfit}{\encodingdefault}{\sfdefault}{m}{sl}
\SetMathAlphabet{\mathsfit}{bold}{\encodingdefault}{\sfdefault}{bx}{n}

